\documentclass{article} 
\usepackage{iclr2026_conference,times}

\usepackage{amsmath,amsfonts,bm}

\def\1{\bm{1}}

\DeclareMathAlphabet{\mathsfit}{\encodingdefault}{\sfdefault}{m}{sl}
\SetMathAlphabet{\mathsfit}{bold}{\encodingdefault}{\sfdefault}{bx}{n}

\usepackage{hyperref}
\hypersetup{
    colorlinks=true,
    linkcolor=red,
    filecolor=magenta,
    urlcolor=cyan,
    citecolor=cyan,
}
\usepackage{url}
\usepackage{amssymb}
\usepackage{amsthm}

\title{Look Carefully: Adaptive Visual Reinforcements in Multimodal Large Language Models for Hallucination Mitigation}



\author{
Xingyu Zhu$^{1,2}$, 
Kesen Zhao$^{2}$,  
Liang Yi$^{1}$, 
Shuo Wang$^{1}$, 
Zhicai Wang$^{1}$, 
\\
\textbf{
Beier Zhu$^{2 } $\thanks{Corresponding authors.} ,
Hanwang Zhang$^{2}$,
Xiangnan He$^{1}$\kern-0.1em\footnotemark[1]
}
\\
$^{1}$ MoE Key Lab of BIPC, University of Science and Technology of China \\ 
$^{2}$ Nanyang Technological University\\
\texttt{xyzhuxyz@mail.ustc.edu.cn}
}

\usepackage{bm}
\usepackage{bbm}
\usepackage{booktabs}
\usepackage{xcolor}         
\usepackage[dvipsnames]{xcolor}
\usepackage[table]{xcolor}
\usepackage{tcolorbox}
\usepackage{colortbl}
\usepackage{multirow}
\usepackage{pifont}
\usepackage{enumitem}
\usepackage{subcaption}
\usepackage{makecell} 
\usepackage{amsmath}
\usepackage{mathtools}
\usepackage{adjustbox}

\newcommand{\tableCellHeight}{1}
\newcommand{\tabstyle}[1]{
  \setlength{\tabcolsep}{#1}
  \renewcommand{\arraystretch}{\tableCellHeight}
  \centering
  \small
}

\def\eg{\emph{e.g.}} 
\def\ie{\emph{i.e.}}

\newcommand{\z}{\mathbf{z}}

\newcommand{\h}{\mathbf{h}}

\newcommand{\C}{\mathbf{C}}

\newcommand{\T}{\mathbf{T}}

\newcommand{\basexxx}[1]{\textcolor{gray!85}{\tiny \ $\uparrow${#1}}}
\newcommand{\basexxxdown}[1]{\textcolor{gray!85}{\tiny \ $\downarrow${#1}}}

\newcommand{\ours}{\textbf{AIR}}

\definecolor{customgreen}{RGB}{0, 153, 0}
\definecolor{customred}{RGB}{204, 0, 0}
\definecolor{custompink}{RGB}{255, 105, 180}
\definecolor{mygray}{gray}{.92}

\iclrfinalcopy 
\begin{document}

\maketitle

\begin{abstract}
    Multimodal large language models (MLLMs) have achieved remarkable progress in vision–language reasoning, yet they remain vulnerable to hallucination, where generated content deviates from the visual evidence. 
    Existing mitigation strategies either demand costly supervision during training or introduce additional latency at inference. 
    Recent vision-enhancement methods attempt to address this by reinforcing visual tokens during decoding, but they typically inject all tokens indiscriminately, leading to interference from background regions and distracting the model from critical cues.  
    To overcome this challenge, we propose an \textbf{A}daptive v\textbf{I}sual \textbf{R}einforcement framework for MLLMs, dubbed as \textbf{AIR}. 
    AIR consists of two main components: prototype-based token reduction, which condenses the large pool of visual tokens into a compact subset to suppress redundancy, and OT-guided patch reinforcement, which quantifies the alignment between hidden state and patch embeddings to selectively integrate the most consistent patches into the feed-forward layers.  
    As a result, AIR enhances the model’s reliance on salient visual information and effectively mitigates hallucination. 
    Extensive experiments across representative MLLMs demonstrate that AIR substantially reduces hallucination while preserving general capabilities, establishing it as an effective and independent solution for building reliable MLLMs.
\end{abstract}

\section{Introduction}
Multimodal large language models (MLLMs)~\citep{chen2023internvl, zhu2024boosting, zhu2024selective, llava, llava_vision, llavanext, 10.1145/3746027.3754856, wu2025video, wu2025number, abs-2505-16707, wu2025generalization} have achieved remarkable progress by unifying vision and language, enabling reasoning over interleaved text–image inputs. They have been widely applied in tasks~\citep{wu2024glance, wu2025unlearning} such as visual question answering~\citep{wu2025number} and image captioning~\citep{yang2023exploring, Zhao_2025_ICCV}. Despite these advances, MLLMs remain prone to hallucination~\citep{JiangCZLSY25, Reefknot, nullu}, where generated content is inconsistent with the visual input, \eg, describing non-existent objects or producing contradictory interpretations. This vulnerability poses a barrier to deployment in real-world scenarios.

Existing hallucination mitigation strategies can be broadly divided into training-time and inference-time methods. Training-time approaches~\citep{gunjal2024detecting, Lyu, chip, opadpo} rely on additional annotations to fine-tune MLLMs, while inference-time approaches typically adopt contrastive decoding or reranking. Although effective, they either require costly supervision or introduce extra latency. 
Recent efforts~\citep{caac, tavi, clearsight, memvr} have strengthened the contribution of image tokens during decoding. They are annotation-free and incur little additional overhead, making them broadly applicable in practice. Concretely, these methods improve grounding by re-injecting visual tokens into the feed-forward network (FFN)~\citep{memvr, YuanYB25, ONLY, zhou2025care}.

Despite their success, we observe that visual inputs often contain substantial interference, such as background regions, which may include redundant objects or distracting semantics. 
Simply \textit{fusing all visual tokens} into the decoding process can distract the model’s attention from critical regions.
As illustrated in Fig.~\ref{fig:motivation}\,(a), we compare two injection strategies, The first strategy, original token injection~\citep{memvr}, leads to hallucinations as the model attends to irrelevant background regions. In contrast, with relevant token re-injection, the model effectively mitigates hallucinations by focusing on the critical visual content. This is also demonstrated by the heatmap depicted in Fig.~\ref{fig:motivation}\,(c). 
To explore this phenomenon in more depth, we analyze the similarity between hidden states and visual tokens across decoding layers using LLaVA-1.5-7B in the MSCOCO dataset~\citep{mscoco}.
As illustrated in Fig.~\ref{fig:motivation}\,(b), effective target regions (\eg, patch 1 and patch 2) consistently yield higher similarity, whereas background regions (patch 3) remain low. The similarity of the original image tokens lies between, suggesting that they dilute the importance of salient cues.  

Motivated by these findings, we propose \textbf{AIR}, an adaptive visual reinforcement framework that amplifies critical evidence and suppresses redundancy. Instead of re-injecting the full set of visual tokens, AIR is built on two key components. The first, {prototype-based token reduction}, compresses image tokens into a compact subset, filtering out repetitive background signals and reducing computation. The second, {OT-guided patch reinforcement}, leverages entropically regularized optimal transport to evaluate alignment between hidden states and patch embeddings, ensuring that only well-aligned regions are integrated into the decoder’s FFN. Together, these designs enable the model to focus on salient regions and mitigate hallucination in a training-free and efficient manner.

To validate the effectiveness of our framework, we conduct extensive experiments on multiple representative MLLMs, including LLaVA-1.5-7B~\citep{llava}, Qwen-VL~\cite{Qwen-VL}, and GLM-4V-9B~\cite{chat-glm}. The results demonstrate that AIR consistently lowers hallucination rates on benchmarks such as CHAIR~\citep{chair} and POPE~\citep{pope}, while maintaining strong performance on complementary tasks including existence, counting, and translation, as illustrated in Fig.~\ref{fig:motivation}\,(d). These results highlight that AIR is a training-free framework that generalizes across diverse MLLMs, providing an effective solution for hallucination mitigation.





\begin{figure}[t]
    \centering
    \includegraphics[width=\linewidth]{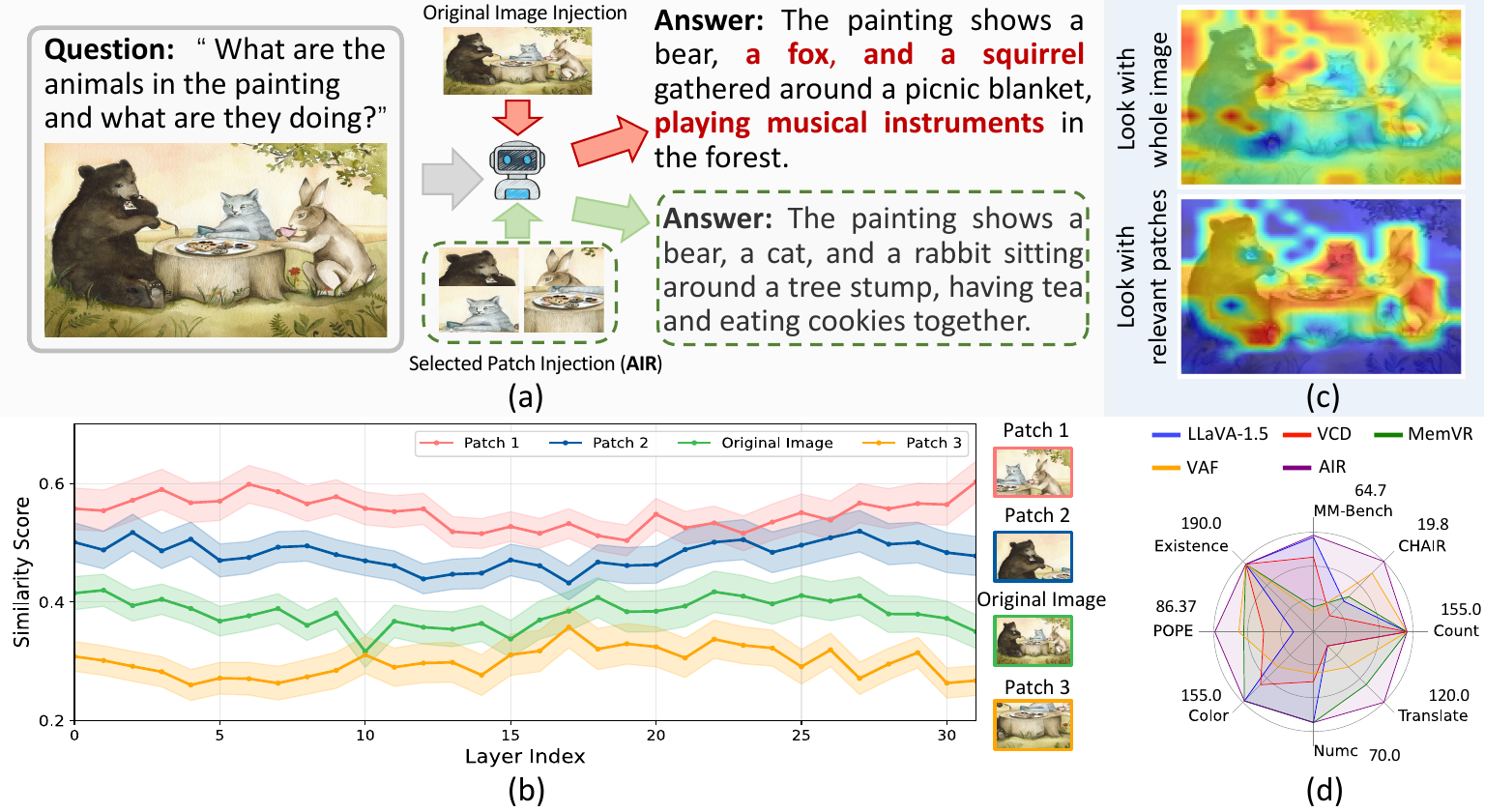}
    \caption{
    Analysis of existing hallucination mitigation strategies in multimodal large language models (MLLMs). 
    \textbf{(a)} existing strategies hallucinate non-existent objects, while AIR produces faithful, image-grounded answers.  
    \textbf{(b)} Similarity across decoder layers between hidden states and different visual tokens, showing that salient patches consistently achieve higher alignment than irrelevant ones.  
    \textbf{(c)} Attention heatmaps comparing existing re-injection with AIR: prior methods spread attention to irrelevant regions, whereas AIR focuses on semantically critical areas.   
    \textbf{(d)} AIR reduces hallucination across benchmarks with minimal impact on general multimodal performance.
    }
    \label{fig:motivation}
\end{figure}

\section{Related Works}
\noindent{\textbf{Hallucination in MLLMs.}} Object hallucination occurs when multimodal LLMs generate fluent but visually inconsistent outputs, compromising reliability in multimodal reasoning.
Existing mitigation strategies fall into three categories.
\emph{Training-based methods} fine-tune models with curated datasets~\citep{gunjal2024detecting}, enforce cross-modal alignment~\citep{chip}, or apply preference optimization~\citep{opadpo}, but these approaches require costly annotations and heavy computation.
\emph{Post-processing approaches} revise or filter responses with external models or lightweight revisers, such as LURE~\citep{LURE, Woodpecker, LCL, nullu, VCT}, which increase flexibility but add system complexity.
\emph{Inference-time interventions} modify decoding without retraining, for example logit shifting~\citep{MARINE} or contrastive decoding~\citep{VCD, IFCD}, providing efficiency but sometimes reducing stability.
Different from these paradigms, we introduce a training-free approach that adaptively calibrates attention and selectively reinforces critical visual patches via optimal transport, achieving effective hallucination mitigation without fine-tuning or auxiliary models.

\noindent{\textbf{Optimal transport.}} 
Optimal transport (OT) provides a principled framework to measure discrepancies between distributions by explicitly modeling the cost of transporting probability mass~\citep{monge1781memoire}. Unlike pointwise similarity metrics (\eg, cosine distance), OT captures the global geometric structure of two distributions, yielding a more faithful measure of semantic alignment. Although the exact solution is computationally demanding, entropic regularization with the Sinkhorn algorithm~\citep{sinkhorn} has made OT scalable to high-dimensional problems. These advances have enabled a broad range of applications, including domain adaptation~\citep{TurrisiFRP22, ChangSTW22}, distribution calibration~\citep{GuoT0ZZ22}, and image recognition/clustering~\citep{wang2023tuning, LiWLZLCZ23,zhu2025hierarchicalsemanticalignmentimage}. 
In vision–language modeling, OT has been adopted to align modality distributions in few-shot learning~\citep{zhou2022hierarchical, iLPC,Zhu_2025_ICCV,zhu2025enhancing}, refine prompts for cross-modal transfer~\citep{chen2023plot}, and improve zero-shot generalization~\citep{AWT,fang2025hierarchical, zhu2024enhancing, 10123043}.
In contrast, our approach employs OT directly at inference: we compute the transport distance between original image tokens and patch embeddings, and use it as a fine-grained criterion to select patches that preserve critical visual semantics. The selected patches are then fused into the decoder, providing a lightweight yet distribution-aware reinforcement of visual evidence.

\section{Method}\label{sec:method}
\subsection{Preliminaries}\label{sec:prelims}
\textbf{\noindent{Multimodal large language models.}} 
Multimodal large language models (MLLMs) extend conventional large language models (LLMs) to jointly process text and images. An MLLM typically consists of a vision encoder, a text encoder, and an autoregressive decoder. Given a textual query $x={[x_1,\dots,x_L]}$ and an image $v$, the vision encoder extracts visual features and then transforms them into aligned visual tokens $\mathbf{Z}=[\z_1, \z_2, \cdots, \z_K]$.
Let ${[\mathbf{x}_1,\cdots,\mathbf{x}_L]}$ denote the embeddings of $x$ produced by the text encoder, and define the multimodal input sequence:
\begin{equation}
    \mathbf{X} = [\,\mathbf{Z},\; \mathbf{x}_1,\dots,\mathbf{x}_L\,].
\end{equation}
At decoding step $t$, the transformer-based decoder produces unnormalized logits $f_\theta(\cdot \mid \mathbf{X}, y_{<t})$, from which the next-token distribution is obtained via the softmax:
\begin{equation}
    p_\theta(\cdot \mid \mathbf{X}, y_{<t})=\mathrm{softmax}\big(f_\theta(\cdot \mid \mathbf{X}, y_{<t})\big), 
    \qquad
    y_t \sim p_\theta(\cdot \mid \mathbf{X}, y_{<t}), \;\; t=1,\dots,T.
\end{equation}
Here, $\theta$ denotes all model parameters and $y_{<t}$ is the previously generated tokens.

\textbf{\noindent{Optimal transport.}}
Optimal Transport (OT)~\citep{monge1781memoire, wang2023tuning} offers a principled framework for quantifying the discrepancy between two probability distributions. 
Consider two discrete measures in the feature space:
$\mathbb{P} = \sum_{i=1}^{|V|} a_i \delta({\mathbf{v}_i}-\mathbf{v})$ and $\mathbb{Q} = \sum_{j=1}^{|U|} b_j \delta({\mathbf{u}_j}-\mathbf{u})$,
where $\delta$ denotes the Dirac delta function, and $|V|$ and $|U|$ are the number of support points in $\mathbb{P}$ and $\mathbb{Q}$, respectively.
Here, $\mathbf{a} = [a_1,\dots,a_{|V|}]^\top$ and $\mathbf{b} = [b_1,\dots,b_{|U|}]^\top$ are probability vectors that sum to one. 
Given a cost matrix $\mathbf{C} \in \mathbb{R}^{|V|\times|U|}$, where $\mathbf{C}(i, j)$ is the element in $\mathbf{C}$, denoting the cost of transporting unit mass from $\mathbf{v}_i$ to $\mathbf{u}_j$, and the OT distance between $\mathbb{P}$ and $\mathbb{Q}$ is formulated as:
\begin{equation}
\label{eq:optimization}
d_{\mathrm{OT}}(\mathbb{P},\mathbb{Q};\mathbf{C}) = 
\min_{\mathbf{T}} \langle \mathbf{T}, \mathbf{C} \rangle, 
\quad
\text{s.t. } \mathbf{T}\mathbf{1}_{|U|} = \mathbf{a}, \;\;  \mathbf{T}^\top \mathbf{1}_{|V|} = \mathbf{b},
\end{equation}
where $\mathbf{T} \in \mathbb{R}^{|V|\times|U|}$ is the transport plan specifying how mass is moved between the two distributions. 
The notation $\langle \cdot,\cdot \rangle$ denotes the Frobenius inner product, and $\mathbf{1}_{|V|}$ is an all-ones vector of dimension $|V|$. Directly solving Eq.~\eqref{eq:optimization} is computationally intensive. 
Prior work~\citep{iLPC, chen2023plot} addresses this by employing the Sinkhorn algorithm~\citep{sinkhorn}, which introduces entropic regularization to achieve efficient optimization:
\begin{equation}
\label{eq:opt_Sink}
d_{\mathrm{OT}}(\mathbb{P}, \mathbb{Q};\mathbf{C}) = 
\min_{\mathbf{T}} \langle \mathbf{T}, \mathbf{C} \rangle - \epsilon h(\mathbf{T}),
\end{equation}
where $h(\cdot)$ denotes the entropy and $\epsilon \geq 0$ controls the strength of the regularization.

\begin{figure}[t]
    \centering
    \includegraphics[width=\linewidth]{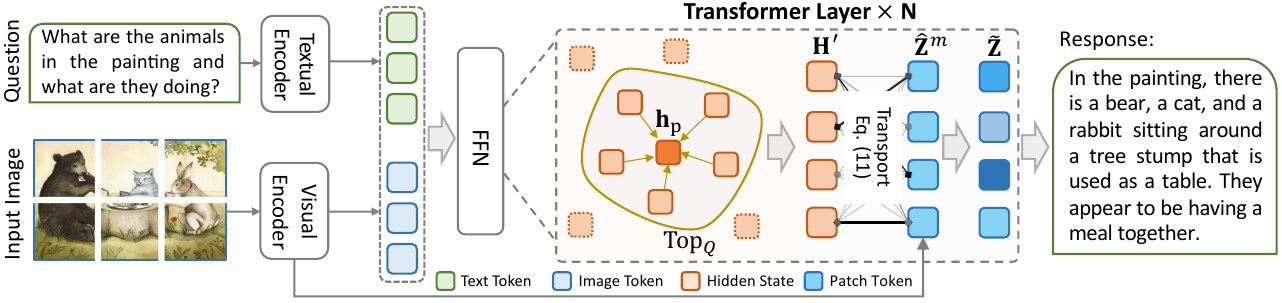}
    \caption{Overview of our proposed AIR framework. 
    Given a multimodal input (image and question), visual features are extracted by the visual encoder and aligned with textual embeddings via the projector. 
    Inside the Transformer layers, visual tokens are first compressed through prototype-based selection to remove redundancy, and then reinforced by patch-level alignment using optimal transport. 
    These selective reinforcement strategies enrich the hidden states with salient visual cues, enabling AIR to produce safer, more faithful, and visually grounded responses.
    }
    \label{Fig_motivation}
\end{figure}

\subsection{Adaptive Visual Reinforcement}
\noindent{\textbf{Prototype-based token reduction.}} Our visual reinforcement strategy operates in the feed-forward network (FFN) of each Transformer block, 
which consists of two fully connected layers with a non-linear activation:
\begin{equation}
    \mathrm{FFN}(\mathbf{H})
    = \phi\!\left(\mathbf{H}\,\mathbf{W}_1\right)\,\mathbf{W}_2^{\top},
\end{equation}
where $\phi(\cdot)$ denotes the activation function and $\mathbf{H}$ represents the hidden states. 
As the Transformer goes deeper, attention tends to become progressively biased toward textual tokens, 
which diminishes the contribution of visual tokens in later layers. 
A common remedy is to re-inject visual tokens into the FFN~\citep{memvr}:
\begin{equation}
 \mathrm{FFN}(\mathbf{H}|\mathbf{Z}) =\phi(\mathbf{H}\mathbf{Z}^{\top})\mathbf{Z}
\end{equation}
where $\mathbf{Z}$ denotes the visual tokens aligned by the projector.  
However, since visual tokens are derived from the entire image, their length $K$ is typically large (e.g., $K=576$ in LLaVA), 
which introduces redundancy and noise. Directly re-injecting all tokens not only incurs unnecessary computational overhead 
but also prevents the model from focusing on the most informative regions. 
To address this issue, we first condense $\mathbf{H}$ into a compact subset. 
We compute a prototype $\mathbf{h}_{\mathrm{p}} = \tfrac{1}{K}\sum_{k=1}^{K}\mathbf{h}_k$ as a coarse summary of the visual semantics, 
and rank tokens by their distance to this prototype:
\begin{equation}
    d(\mathbf{h}_k,\mathbf{h}_{\mathrm{p}}) = \|\mathbf{h}_k - \mathbf{h}_{\mathrm{p}}\|_2,
    \qquad k=1,\dots,K.
\end{equation}
Tokens with larger distances encode more distinctive cues not captured by the global prototype.  
We therefore retain only the Top$_Q$ tokens:
\begin{equation}
    {\mathbf{H}}^{\prime} \;\leftarrow\; \{\mathbf{h}_k \mid k \in \operatorname{Top}_Q(\{d(\mathbf{h}_k,\mathbf{h}_{\mathrm{p}})\})\},
\end{equation}
ensuring that subsequent reinforcement operates on a compact set of visual representations.  

\paragraph{OT-guided patch reinforcement.}
While prototype selection reduces global redundancy, different image regions may still vary in importance.  
To further emphasize critical details, we crop the image into multiple patches $\{\hat{v}^m\}_{m=1}^M$ with corresponding embeddings $\{\hat{\mathbf{Z}}^m\}_{m=1}^{M}$, 
where each $\hat{\mathbf{Z}}^m = [\mathbf{z}^m_1, \mathbf{z}^m_2, \cdots, \mathbf{z}^m_N]$ encodes fine-grained visual details. 
We model the original hidden states and patch-level tokens as discrete distributions:
\begin{equation}
    \mathbb{P}(\h) = \sum_{k=1}^{Q} a_k \delta(\mathbf{h}_k-\h), \quad 
    \mathbb{Q}_m(\hat{\z}) = \sum_{n=1}^{N} b^m_n \delta(\hat{\mathbf{z}}_n^m-\hat{\z}),
\end{equation}
where $\delta(\cdot)$ denotes the Dirac delta function, and $a_k$, $b_n^m$ are normalized importance weights. 
The alignment between $\mathbb{P}$ and $\mathbb{Q}_m$ is quantified using the OT distance:

\begin{equation}
    d_{\mathrm{OT}}(\mathbb{P}, \mathbb{Q}_m; \mathbf{C}_m) 
    = \min_{\mathbf{T}_m \ge 0} \langle \mathbf{T}_m, \mathbf{C}_m \rangle, 
    \quad \text{s.t.} \quad \mathbf{T}_m \mathbf{1}_Q = \mathbf{a}, \; 
    \mathbf{T}_m^\top \mathbf{1}_N = \mathbf{b}_m,
\end{equation}
where $\mathbf{a}=[\tfrac{1}{Q},\dots,\tfrac{1}{Q}]^\top$, 
$\mathbf{b}_m=[\tfrac{1}{N},\dots,\tfrac{1}{N}]^\top$, 
and $\mathbf{C}_m(k,n)=1-\cos(\mathbf{z}_k,\hat{\mathbf{z}}_n^m)$. 
The transport plan $\mathbf{T}_m$ is efficiently obtained via the Sinkhorn-Knopp algorithm~\citep{AWT, chen2023plot}. 
For each patch $m$, we compute an aggregated OT distance:
\begin{equation}
    d_{\mathrm{OT}}(m) = \sum_{k=1}^{Q}\sum_{n=1}^{N}\mathbf{T}_m(k,n)\,\mathbf{C}_m(k,n). \label{eq:dotm}
\end{equation}
A lower OT distance indicates stronger alignment with the original image and thus suggests that the patch retains more critical visual information. 
We therefore select patches with threshodling $\tau$:
\begin{equation}
    \mathcal{M} = \{\, m \mid d_{\mathrm{OT}}(m) \le \tau \,\}.
\end{equation}
The embeddings of these selected patches are then fused with the original image tokens:
\begin{equation}
    \tilde{\mathbf{Z}} \leftarrow \operatorname{concat}\!\big(\{\hat{\mathbf{Z}}^{\,m} \mid m \in \mathcal{M}\}\big).
\end{equation}
This selective fusion enhances the representation with critical visual details, strengthening the role of image information in the hidden states. The final re-injection into the FFN is formulated as:
\begin{equation}
    \mathrm{FFN}(\mathbf{H}|\tilde{\mathbf{Z}}) 
    = \phi\!\left(\mathbf{H}\,\mathbf{W}_1\right)\,\mathbf{W}_2^{\top} + \phi(\mathbf{H}^{\prime}\tilde{\mathbf{Z}}^{\top})\tilde{\mathbf{Z}}
\end{equation}

\subsection{Theoretical Analysis}\label{sec:analysis}
To validate the effectiveness of our OT-based patch selection in Dynamic Token Infusion, we compare the sensitivity of our OT distance metric $d_{\mathrm{OT}}(m)$ with the baseline cosine distance $d_{\cos}(m) = \frac{1}{KN} \sum_{k=1}^K \sum_{n=1}^N \C_m(k,n)$. In our framework, a patch $ m $ is selected as useful if $ d_{\mathrm{OT}}(m) \leq \tau $, as lower OT distances indicate stronger alignment with critical visual semantics. Conversely, for the cosine baseline, a patch is selected if $ d_{\cos}(m) \leq \tau_{\cos} $, reflecting misalignment. We prove that the OT-based method achieves strictly higher sensitivity in distinguishing patches:
\begin{equation}\label{eq:ot_margin}
|d_{\mathrm{OT}}(m_1) - d_{\mathrm{OT}}(m_2)| > |d_{\cos}(m_1) - d_{\cos}(m_2)|,
\end{equation}
except in the degenerate case where $\C_{m_1} = \C_{m_2}$.

\paragraph{Why OT enhances patch selection sensitivity.}
The OT-based metric employs an adaptive transport plan $ \T_m $, computed via Sinkhorn-Knopp, prioritizing low-cost alignments (high cosine similarities) between original and patch tokens. This adaptive weighting amplifies differences in semantic alignment, resulting in a larger separation $ |d_{\mathrm{OT}}(m_1) - d_{\mathrm{OT}}(m_2)| $ compared to the uniform weighting of $ d_{\cos} $, which averages all pairwise costs and dilutes discriminative features. The increased sensitivity ensures more precise identification of patches in $ \mathcal{M} $ that capture critical visual information. A proof is given in Appendix~\ref{app:proof}.

\section{Experiments}
\label{sec:exp}
In this section, we present the experimental results of our method across hallucination and general benchmarks, including performance comparisons, ablation studies, and visualization analyses.

\subsection{Experimental Setup}
\noindent\textbf{Base model and baselines.} %
We validate our method on three MLLMs, including LLaVA-1.5-7B~\citep{llava}, Qwen-VL-Chat~\citep{Qwen-VL}, and GLM-4V-9B~\citep{chat-glm}.
For comparison, we include several state-of-the-art object hallucination mitigation methods: 
VCD~\citep{VCD}, 
MemVR~\citep{memvr}, and 
VAF~\citep{clearsight}.

\noindent\textbf{Evaluation benchmarks and metrics.} %
We evaluate our method on a variety of benchmarks. 
For hallucination assessment, we use CHAIR~\citep{chair} on 500 randomly sampled MSCOCO~\citep{mscoco} images, and POPE~\citep{pope} on MSCOCO, A-OKVQA~\citep{aokvqa}, and GQA~\citep{gqa}. 
For general-purpose evaluation, we employ LLaVA-Bench~\citep{llava_bench},  MME~\citep{MME}, and MMBench~\citep{MMBench}. 
As evaluation metrics, we report $\text{CHAIR}_S$, $\text{CHAIR}_I$, BLEU, accuracy, and F1-score. 
Detailed descriptions and implementations are provided in the Appendix~\ref{appx:inple} and~\ref{appx:bench}.

\subsection{Performance on Hallucination Benchmarks}As shown in Table~\ref{tab:chair_results}, AIR consistently achieves the lowest CHAIR$_S$ and CHAIR$_I$ across three representative MLLMs, demonstrating its effectiveness in suppressing hallucinations. For example, on LLaVA-1.5-7B, AIR reduces CHAIR$_S$ from 22.0 to 18.4 and CHAIR$_I$ from 6.7 to 5.7, while maintaining comparable BLEU scores. This confirms that selectively reinforcing salient tokens mitigates hallucinations more reliably than indiscriminate re-injection. Table~\ref{tab:res_pope} further validates the robustness of AIR on the POPE benchmark. Across MSCOCO, A-OKVQA, and GQA, AIR achieves the best or near-best accuracy and F1-score under Random, Popular, and Adversarial settings. Notably, AIR sustains strong performance even under adversarial prompts, outperforming prior defenses such as MemVR. More experimental results are reported in Appendix~\ref{appx:add_res}, Tables~\ref{tab:chair_results_32token} and~\ref{tab:pope_Qwen}.
\renewcommand{\arraystretch}{1.3} 
\begin{table*}[!t]
\centering
\caption{CHAIR evaluation results on MSCOCO dataset of MLLMs with different methods. We use 64 as the maximum token in this experiment. Bold indicates the best result of all methods.}
\tabstyle{0.8pt}
\begin{tabular}{ l | c c c | c c c | c c c }
\toprule
\multirow{2.5}{*}{{Method}} 
&\multicolumn{3}{c|}{LLaVA-1.5-7B}
&\multicolumn{3}{c|}{Qwen-VL-Chat}
&\multicolumn{3}{c}{GLM-4V-9B}
 \\
\cmidrule{2-10}
&{CHAIR}$_S \downarrow$ &{CHAIR}$_I \downarrow$ &BLEU$\uparrow$
&{CHAIR}$_S \downarrow$ &{CHAIR}$_I \downarrow$ &BLEU$\uparrow$
&{CHAIR}$_S \downarrow$ &{CHAIR}$_I \downarrow$ &BLEU$\uparrow$
   \\ 
\midrule
Vanilla
&$\text{22.0}${\basexxx{0.0}} &$\text{6.7}${\basexxx{0.0}} &$\text{14.5}${\basexxx{0.0}} &$\text{20.0}${\basexxx{0.0}} &$\text{6.2}${\basexxx{0.0}} &$\text{13.5}${\basexxx{0.0}}
&$\text{13.0}${\basexxx{0.0}} &$\text{5.6}${\basexxx{0.0}} &$\textbf{9.8}${\basexxx{0.0}}
  \\

VCD 
&$\text{24.6}$\basexxx{2.6} &$\text{7.3}$\basexxx{0.6} &$\text{13.9}$\basexxxdown{0.6} 
&$\text{19.2}${\basexxxdown{0.8}} &$\text{\textbf{5.7}}${\basexxxdown{0.5}} &$\text{13.4}${\basexxx{0.1}}&$\text{14.8}${\basexxx{1.8}} &$\text{6.5}${\basexxx{0.9}} &$\text{9.5}${\basexxxdown{0.3}}
\\
MemVR 
&$\text{21.6}$\basexxxdown{0.4} &$\text{6.4}$\basexxxdown{0.3} &$\text{14.4}$\basexxxdown{0.1} &$\text{20.0}${\basexxx{0.0}} &$\text{6.1}${\basexxxdown{0.1}} &$\text{13.3}${\basexxxdown{0.2}}
&$\text{13.0}${\basexxx{0.0}} &$\text{5.6}${\basexxx{0.0}} &$\textbf{9.8}${\basexxx{0.0}}
 
\\ 
VAF 
&$\text{20.4}$\basexxxdown{1.6} &$\text{6.5}$\basexxxdown{0.2} &$\textbf{14.6}$\basexxx{0.1} &$\text{20.6}${\basexxx{0.6}} &$\text{6.6}${\basexxx{0.4}} &$\text{13.4}${\basexxxdown{0.1}}
&$\text{\textbf{11.6}}${\basexxxdown{1.4}} &$\text{\textbf{5.3}}${\basexxxdown{0.3}} &$\text{9.7}${\basexxxdown{0.1}}

\\ 
\midrule
\rowcolor{mygray}$\ours$ 
&$\text{\textbf{18.4}}$\basexxxdown{3.6} &$\text{\textbf{5.7}}$\basexxxdown{1.0} &$\text{14.4}$\basexxxdown{0.1} &$\text{\textbf{18.6}}${\basexxxdown{1.4}} &$\text{5.9}${\basexxxdown{0.3}} &$\textbf{13.6}${\basexxx{0.1}}
&$\text{\textbf{11.6}}${\basexxxdown{1.4}} &$\text{\textbf{5.3}}${\basexxxdown{0.3}} &$\text{9.7}${\basexxxdown{0.1}}


\\ \bottomrule
\end{tabular}
\vspace{-6pt}
\label{tab:chair_results}

\end{table*}
\begin{table*}[!t]
\centering
\caption{Performance on POPE benchmark using LLaVA-1.5-7B. Bold numbers indicate the best results. 
We report accuracy and F1-score under three settings, \ie, \emph{Random}, \emph{Popular}, and \emph{Adversarial}, to show the robustness of different methods.} 
\tabstyle{4.2pt}
\begin{tabular}{l|lcccccc}
\toprule
\multirow{2.5}{*}{Datasets} 
 & \multirow{2.5}{*}{Methods} 
 & \multicolumn{2}{c}{Random}  
 & \multicolumn{2}{c}{Popular} 
 & \multicolumn{2}{c}{Adversarial} 
\\ 
\cmidrule(l){3-4} \cmidrule(l){5-6} \cmidrule(l){7-8}
&  
& Accuracy $\uparrow$ & F1-score $\uparrow$ 
& Accuracy $\uparrow$ & F1-score $\uparrow$  
& Accuracy $\uparrow$ & F1-score $\uparrow$  
\\ 
\midrule
\multirow{5}{*}{MSCOCO}
& {Vanilla}
 & 83.7 \basexxx{0.0} & 83.0 \basexxx{0.0}
 & 78.2 \basexxx{0.0} & 78.4 \basexxx{0.0}
 & 75.0 \basexxx{0.0} & 76.0 \basexxx{0.0}
\\



& VCD 
 & 85.4 \basexxx{1.7} & 83.7 \basexxx{0.7}
 & 84.3 \basexxx{6.1} & 83.0 \basexxx{4.6}
 & 81.8 \basexxx{6.8} & 80.9 \basexxx{4.9}
\\

& MemVR 
 & 87.6 \basexxx{3.9} & 86.2 \basexxx{3.2}
 & 86.0 \basexxx{7.8} & 84.7 \basexxx{6.3}
 & 83.5 \basexxx{8.5} & 82.5 \basexxx{6.5}
\\

& VAF 
 & 87.6 \basexxx{3.9} & 86.2 \basexxx{3.2}
 & 86.2 \basexxx{8.0} & 85.0 \basexxx{6.6}
 & \textbf{83.9} \basexxx{8.9} & 82.8 \basexxx{6.8}

\\  

& \cellcolor{mygray}$\ours$
& \cellcolor{mygray}\textbf{89.0} \basexxx{5.3} & \cellcolor{mygray}\textbf{88.2} \basexxx{5.2}
& \cellcolor{mygray}\textbf{87.1} \basexxx{8.9} & \cellcolor{mygray}\textbf{86.4} \basexxx{8.0}
& \cellcolor{mygray}\textbf{83.9} \basexxx{8.9} & \cellcolor{mygray}\textbf{83.6} \basexxx{7.6} \\
\cmidrule(r){1-8}
\multirow{5}{*}{A-OKVQA}
& {Vanilla}
 & 83.4 \basexxx{0.0} & 82.6 \basexxx{0.0}
 & 79.9 \basexxx{0.0} & 79.6 \basexxx{0.0}
 & 74.0 \basexxx{0.0} & 75.1 \basexxx{0.0}
\\



& VCD 
 & 85.9 \basexxx{2.5} & 85.4 \basexxx{2.8} 
 & 81.9 \basexxx{2.0} & 82.0 \basexxx{2.4}
 & 76.7 \basexxx{2.7} & 78.4 \basexxx{3.3}
\\

& MemVR 
 & 89.0 \basexxx{5.6} & 88.5 \basexxx{5.9}
 & \textbf{84.6} \basexxx{4.8} & 84.6 \basexxx{5.1}
 & \textbf{78.3} \basexxx{4.3} & 79.6 \basexxx{4.5}
\\

& VAF 
 & 88.7 \basexxx{5.3} & 88.3 \basexxx{5.7}
 & 84.1 \basexxx{4.2} & 84.3 \basexxx{4.7}
 & 76.9 \basexxx{2.9} & 78.7 \basexxx{3.6}

\\  

 & \cellcolor{mygray}$\ours$ 
 & \cellcolor{mygray}\textbf{89.2} \basexxx{5.8} & \cellcolor{mygray}\textbf{88.9} \basexxx{6.3}
 &\cellcolor{mygray} 84.4 \basexxx{4.5} & \cellcolor{mygray}\textbf{84.7} \basexxx{5.1}
 & \cellcolor{mygray}78.0 \basexxx{4.0} & \cellcolor{mygray}\textbf{79.7} \basexxx{4.6}

\\ 
\cmidrule(r){1-8}
\multirow{5}{*}{GQA}
& {Vanilla}
 & 83.7 \basexxx{0.0} & 83.0 \basexxx{0.0}
 & 78.2 \basexxx{0.0} & 78.4 \basexxx{0.0}
 & 75.1 \basexxx{0.0} & 76.1 \basexxx{0.0}
\\



& VCD 
 & 86.3 \basexxx{2.6} & 85.8 \basexxx{2.8} 
 & 78.4 \basexxx{0.2} & 79.0 \basexxx{0.6}
 & 76.2 \basexxx{1.1} & 77.4 \basexxx{1.3}
\\

& MemVR 
 & 89.3 \basexxx{5.6} & 88.9 \basexxx{5.9}
 & 82.9 \basexxx{4.7} & 83.4 \basexxx{5.0}
 & 80.3 \basexxx{5.2} & 81.4 \basexxx{5.3}
\\

& VAF 
 & 88.1 \basexxx{4.4} & 87.7 \basexxx{4.7}
 & 79.4 \basexxx{1.2} & 80.6 \basexxx{2.2}
 & 78.2 \basexxx{3.1} & 79.7 \basexxx{3.6}

\\  

& \cellcolor{mygray}$\ours$ 
 & \cellcolor{mygray}\textbf{89.7} \basexxx{6.0} &\cellcolor{mygray}\textbf{89.5} \basexxx{6.5}
 & \cellcolor{mygray}\textbf{83.0} \basexxx{4.8} 
 & \cellcolor{mygray}\textbf{83.8} \basexxx{5.4}
 & \cellcolor{mygray}80.4 \basexxx{5.3} 
 & \cellcolor{mygray}\textbf{81.7} \basexxx{5.6}

\\  \bottomrule
\end{tabular}
\label{tab:res_pope}
\vspace{-1em}
\end{table*}
\subsection{Performance on General-purpose Benchmarks}

\begin{table*}[!t]
\centering
\caption{Comparison of evaluation results on the MME Hallucination subset and MMBench.}
\tabstyle{4.5pt}
\begin{tabular}{c|lccccccc}
\toprule
\multirow{2}{*}{} 
 & \multirow{2.5}{*}{Methods}  
 & \multicolumn{1}{c}{MME-Hall} 
 & \multicolumn{2}{c}{Object-Level}  
 & \multicolumn{2}{c}{Attribute-Level} 
 & \multicolumn{1}{c}{Cognition} 
 & \multicolumn{1}{c}{MMBench} 
 \\ 
\cmidrule(l){3-3} \cmidrule(l){4-5} \cmidrule(l){6-7} \cmidrule(l){8-8} \cmidrule(l){9-9}
&  
& Total $\uparrow$
& Existence $\uparrow$
& Count $\uparrow$
& Position $\uparrow$
& Color $\uparrow$
& Score $\uparrow$
& Accuracy $\uparrow$
\\ 
\midrule

\multirow{5}{*}{\rotatebox{90}{LLaVA-1.5}}
& Vanilla 
   & 643.3 \basexxx{0.0} 
   & 190.0 \basexxx{0.0} 
   & 155.0 \basexxx{0.0} 
   & 128.3 \basexxx{0.0} 
   & 170.0 \basexxx{0.0} 
   & 357.8 \basexxx{0.0}
   & 64.6 \basexxx{0.0}
   \\

& VCD  
   & 613.3 \basexxxdown{30.} 
   & 190.0 \basexxx{0.0}
   & 140.0 \basexxxdown{15.}
   & 118.3 \basexxxdown{10.}
   & 165.0 \basexxxdown{5.0}
   & 337.1 \basexxxdown{20.}
   & 61.6 \basexxxdown{3.0}
   \\

& MemVR 
   & \textbf{648.3} \basexxx{5.0}
   & 190.0 \basexxx{0.0}
   & 155.0 \basexxx{0.0}
   & \textbf{133.3} \basexxx{5.0}
   & 170.0 \basexxx{0.0}
   & \textbf{378.6} \basexxx{20.}
   & 64.6 \basexxx{0.0}
   \\ 

& VAF 
   & 603.3 \basexxxdown{40.}
   & 190.0 \basexxx{0.0}
   & 135.0 \basexxxdown{20.}
   & 108.3 \basexxxdown{20.}
   & 170.0 \basexxx{0.0}
   & 322.8 \basexxxdown{35.}
   & 14.8 \basexxxdown{49.}
   \\

& \cellcolor{mygray}$\ours$ 
   & 638.3 \basexxxdown{5.0}
   & 190.0 \basexxx{0.0}
   & 155.0 \basexxx{0.0}
   & 123.3 \basexxxdown{5.0}
   & 170.0 \basexxx{0.0}
   & 372.5 \basexxx{14.}
   & \textbf{64.7} \basexxx{0.1}
   \\
   
\midrule

\multirow{5}{*}{\rotatebox{90}{Qwen-VL}}
& Vanilla 
   & 631.7 \basexxx{0.0} 
   & 185.0 \basexxx{0.0} 
   & 145.0 \basexxx{0.0} 
   & 126.7 \basexxx{0.0} 
   & 175.0 \basexxx{0.0} 
   & 342.8 \basexxx{0.0}
   & 59.9 \basexxx{0.0}
   \\

& VCD 
   & 626.7 \basexxxdown{5.0} 
   & 180.0 \basexxxdown{5.0}
   & 145.0 \basexxx{0.0}
   & 126.7 \basexxx{0.0}
   & 175.0 \basexxx{0.0}
   & 348.9 \basexxx{6.1}
   & \textbf{60.3} \basexxx{0.4}
   \\

& MemVR 
   & \textbf{636.7} \basexxx{5.0}
   & 185.0 \basexxx{0.0}
   & 145.0 \basexxx{0.0}
   & \textbf{131.7} \basexxx{5.0}
   & 175.0 \basexxx{0.0}
   & 337.5 \basexxxdown{5.3}
   & 60.0 \basexxx{0.1}
   \\ 

& VAF
   & 631.7 \basexxx{0.0}
   & 185.0 \basexxx{0.0}
   & 145.0 \basexxx{0.0}
   & 126.7 \basexxx{0.0}
   & 175.0 \basexxx{0.0}
   & 329.3 \basexxxdown{14.}
   & 60.0 \basexxx{0.1}
   \\

& \cellcolor{mygray}$\ours$ 
   & \textbf{636.7} \basexxx{5.0}
   & 185.0 \basexxx{0.0}
   & 145.0 \basexxx{0.0}
   & 126.7 \basexxx{0.0}
   & \textbf{180.0} \basexxx{5.0}
   & \textbf{352.1} \basexxx{9.3}
   & 60.0 \basexxx{0.1}
   \\ 
\midrule

\multirow{5}{*}{\rotatebox{90}{GLM-4V-9B}}
& Vanilla 
   & \textbf{703.3} \basexxx{0.0} 
   & \textbf{200.0} \basexxx{0.0} 
   & 168.3 \basexxx{0.0} 
   & \textbf{156.7} \basexxx{0.0} 
   & \textbf{178.3} \basexxx{0.0} 
   & 479.6 \basexxx{0.0}
   & \textbf{81.3} \basexxx{0.0}
   \\

& VCD 
   & 696.7 \basexxxdown{6.6} 
   & \textbf{200.0} \basexxx{0.0} 
   & 170.0 \basexxx{1.7} 
   & 151.7 \basexxxdown{5.0} 
   & 175.0 \basexxxdown{3.3} 
   & \textbf{485.0} \basexxx{5.4}
   & 80.2 \basexxxdown{1.1}
   \\

& MemVR 
   & \textbf{703.3} \basexxx{0.0} 
   & \textbf{200.0} \basexxx{0.0} 
   & 168.3 \basexxx{0.0} 
   & \textbf{156.7} \basexxx{0.0} 
   & \textbf{178.3} \basexxx{0.0} 
   & 479.6 \basexxx{0.0}
   & \textbf{81.3} \basexxx{0.1}
   \\ 

& VAF
   & \textbf{703.3} \basexxx{0.0} 
   & \textbf{200.0} \basexxx{0.0} 
   & \textbf{173.3} \basexxx{5.0} 
   & 151.7 \basexxxdown{5.0} 
   & \textbf{178.3} \basexxx{0.0} 
   & 479.6 \basexxx{0.0}
   & \textbf{81.3} \basexxxdown{5.7}
   \\

& \cellcolor{mygray}$\ours$  
   & \textbf{703.3} \basexxx{0.0} 
   & \textbf{200.0} \basexxx{0.0} 
   & \textbf{173.3} \basexxx{5.0} 
   & 151.7 \basexxxdown{5.0} 
   & \textbf{178.3} \basexxx{0.0} 
   & 479.6 \basexxx{0.0}
   & \textbf{81.3} \basexxx{0.1}
   \\ 
\bottomrule
\end{tabular}
\label{tab:general}
\end{table*}

As shown in Table~3, AIR preserves strong performance on MME and MMBench, achieving results comparable to or better than existing methods across object- and attribute-level tasks. This indicates that selective reinforcement does not compromise general reasoning ability. Table~4 further shows that AIR consistently improves GPT-4V-aided evaluation on LLaVA-Bench. Across all models, both Accuracy and Detailedness scores increase, confirming that AIR enhances output quality while maintaining broad multimodal capability. The detailed results on MME and MMBench are provided in Appendix~\ref{appx:add_res}, Table~\ref{tab:res_mme} and~\ref{tab:res_mmbench_app}.

\subsection{Ablation Studies}
\textbf{\noindent{Impact of operating layers.}} To evaluate how the choice of operating layers influences performance, we conduct an ablation study on the CHAIR dataset with LLaVA-1.5-7B, as reported in Table~\ref{tab:ablation_layers}. Following prior work~\citep{memvr, clearsight, nullu} that emphasizes the role of mid-to-deep layers in vision–language fusion, we begin our analysis from layer 16 onward. The results indicate that reinforcing visual tokens in the range 24–32 yields the most favorable trade-off, achieving the lowest CHAIR$_S$ (18.8) while keeping CHAIR$_I$ (6.0) and BLEU (14.4) stable. In comparison, applying reinforcement too late (\eg, 30–32) or across overly broad spans (\eg, 18–32) leads to higher hallucination scores, suggesting that very late layers are overly text-biased and wide ranges dilute the effect. These findings confirm that mid-to-deep layers are the most effective operating region for enhancing visual grounding while preserving generation quality.

\begin{figure*}[t]
    \begin{minipage}{0.48\textwidth}\label{tab:gpt4v}
        \centering
        \tabstyle{2pt}
        \captionof{table}{Results of GPT-4V-aided evaluation on LLaVA-Bench following the setting in~\citep{VCD}. Both metrics are on a scale of 10.}
        \resizebox{1\textwidth}{!}{
        \renewcommand{\arraystretch}{1.2} 
        \begin{tabular}{lccc}
            \toprule
            {Model} & {Method} & {Accuracy}$\uparrow$ & {Detailedness}$\uparrow$ \\
            \midrule
            \multirow{2}{*}{LLaVA-1.5} & Vanilla & 5.59 & 4.72 \\
            & \cellcolor{mygray}\textbf{\ours} & \cellcolor{mygray}\textbf{5.83} & \cellcolor{mygray}\textbf{5.12} \\
            \midrule
            \multirow{2}{*}{Qwen-VL-Chat} & Vanilla & 5.85 & 4.98 \\
            & \cellcolor{mygray}\textbf{\ours} & \cellcolor{mygray}\textbf{6.18} & \cellcolor{mygray}\textbf{5.12} \\
            \midrule
            \multirow{2}{*}{GLM-4V-9B} & Vanilla & 6.76 & 5.32 \\
            & \cellcolor{mygray}\textbf{\ours} & \cellcolor{mygray}\textbf{6.93} & \cellcolor{mygray}\textbf{5.52} \\
            \bottomrule
        \end{tabular}
        }
        \label{tab:gpt4v}
    \end{minipage}
    \hfill
    \begin{minipage}{0.48\textwidth}        
        \centering
        \captionof{table}{Performance comparison of LLaVA-1.5-7B when operating different decoding layer ranges with our method.}
        \tabstyle{6pt}
        \renewcommand{\arraystretch}{1} 
            \begin{tabular}{l|ccc}
                \toprule
                $\{\ell\}$ &{CHAIR}$_S\downarrow$ &{CHAIR}$_I\downarrow$ &BLEU$\uparrow$ \\ 
                \midrule
                16-32 & 19.6 & 6.0 & 14.5 \\  
                18-32 & 19.2 & 6.0 & 14.4 \\ 
                20-32 & 19.0 &\textbf{5.5} & 14.4 \\
                22-32 & 19.4 & 5.8 & 14.5 \\ 
                24-32 & 18.8 & 6.0 & 14.4 \\ 
                26-32 & \textbf{18.4} & 5.7 & 14.4 \\ 
                28-32 & 19.4 & 5.7 & 14.5 \\ 
                30-32 & 22.0 & 6.7 & 14.5 \\ 
                \bottomrule
            \end{tabular}
        \label{tab:ablation_layers}
    \end{minipage}%
\end{figure*}


\begin{table}[t]
\tabstyle{7pt}
\centering
\caption{Ablation study of our method on the CHAIR using LLaVA-1.5-7B, where we ablate two alignment components: prototype-based token reduction and OT-based patch reinforcement.}
\label{tab:ablation_alignment}
\begin{tabular}{l| c c| c c}
\toprule
\multirow{2.5}{*}{\textbf{Model}} & \multirow{1.8}{*}{{Prototype-based }} & \multirow{1.8}{*}{{OT-based }} & \multicolumn{2}{c}{{CHAIR}} \\ 
\cmidrule(lr){4-5}
& Token Reduction & Patch Reinforcement & $\text{CHAIR}_S\downarrow$ & $\text{CHAIR}_I \downarrow$ \\
\midrule
\multirow{4}{*}{LLaVA-1.5-7B} 
& \ding{55} & \ding{55} & 22.7 & 6.7 \\
& \ding{55} & \ding{51} & 22.3 & 6.5 \\
& \ding{51} & \ding{55} & 20.2 & 6.2 \\
\rowcolor{gray!20} \cellcolor{white} & \ding{51} & \ding{51} & \textbf{19.8} & \textbf{5.8} \\
\bottomrule
\end{tabular}\label{tab:module}
\end{table}
 
\textbf{\noindent{Effectiveness of different components.}}
Table~\ref{tab:module} presents the ablation results on CHAIR. Removing both components yields the highest hallucination rates, highlighting the importance of visual reinforcement. Incorporating only prototype-based token reduction reduces CHAIR$_s$ from 22.7 to 22.3, showing that condensing redundant tokens provides modest gains. OT-based patch reinforcement alone achieves a larger improvement (22.7 to 20.2), confirming the effectiveness of distribution-aware patch selection. When both modules are combined, CHAIR$_S$ further decreases to 19.8 and CHAIR$_I$ drops to 5.8, demonstrating that the two components are complementary and jointly contribute to mitigating hallucinations.

\textbf{\noindent{Impact of $\text{Top}_Q$ selection.}} As shown in Fig.~\ref{fig:topQ}, increasing the number of retained visual tokens $\text{Top}_Q$ leads to a steady reduction in hallucinations, evidenced by the decline of CHAIR$_s$. This result highlights that prototype-based selection successfully preserves the most discriminative cues while filtering redundancy. Moreover, BLEU scores remain nearly unchanged, indicating that semantic fidelity is maintained without compromising language generation quality.

\textbf{\noindent{Impact of threshold $\tau$.}} Fig.~\ref{fig:tau} illustrates that varying the OT threshold $\tau$ produces a U-shaped performance trend. An overly strict threshold removes informative patches, which weakens visual grounding and increases hallucinations, while an overly loose threshold admits irrelevant regions and introduces noise. A moderate value of $\tau$ achieves the best balance between selectivity and coverage, resulting in reduced CHAIR$_s$ and stable BLEU, thereby confirming the effectiveness of OT-based patch selection.

\textbf{\noindent{Impact of augmentated patches.}} Fig.~\ref{fig:patch} quantifies the effect of enlarging the patch pool for OT-based selection: with more candidate patches, the transport plan can match lower-cost, better-aligned regions, yielding stronger visual evidence. Correspondingly, CHAIR$_s$ decreases steadily, while BLEU remains essentially unchanged, indicating that increased candidate coverage improves selected visual information without harming fluency.

\begin{figure}[!t]
\centering
\begin{minipage}{0.3\textwidth}
    \centering
    \begin{subfigure}[b]{1\textwidth}
        \centering
        \includegraphics[width=1\textwidth]{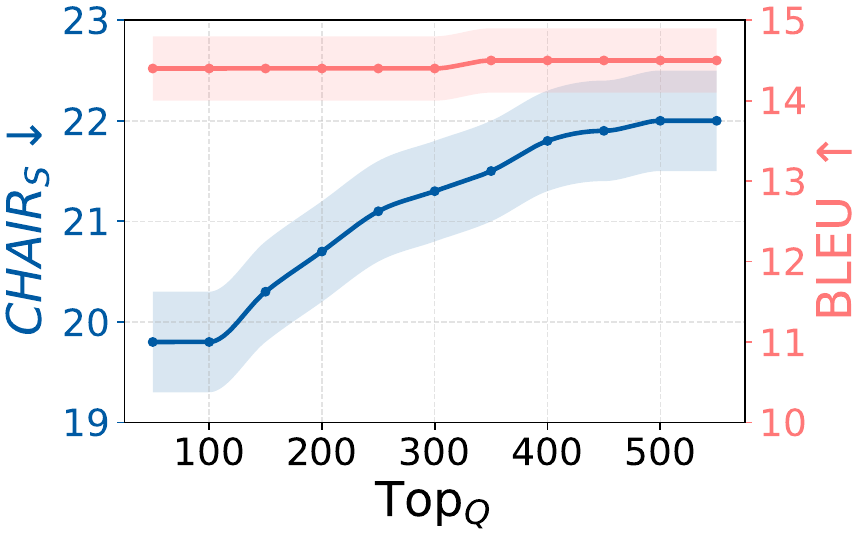}
    \end{subfigure}
    \caption{Performance under different numbers of retained visual tokens $\text{Top}_Q$.}
    \label{fig:topQ}
\end{minipage}
\quad
\begin{minipage}{0.3\textwidth}
    \centering
    \begin{subfigure}[b]{1\textwidth}
        \centering
        \includegraphics[width=1\textwidth]{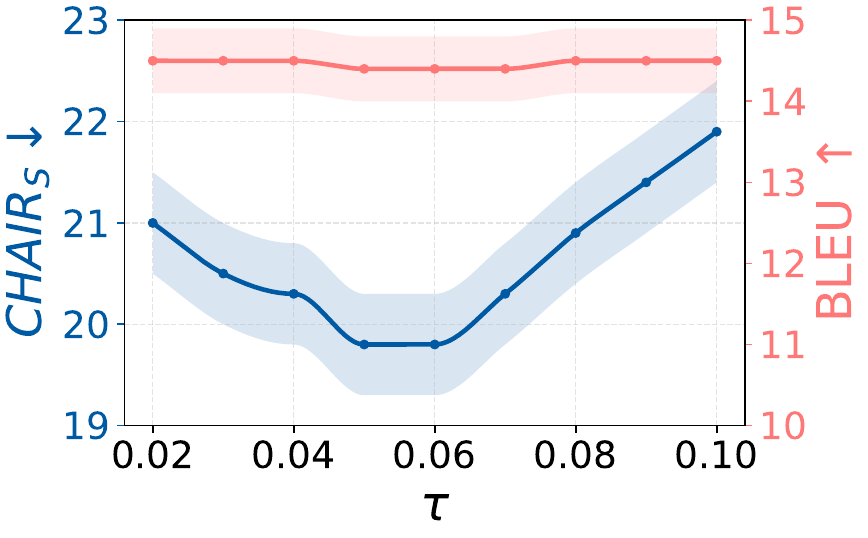}
    \end{subfigure}
    \caption{Performance with varying OT-based distance threshold $\tau$.}
    \label{fig:tau}
\end{minipage}
\quad
\begin{minipage}{0.3\textwidth}
    \centering
    \begin{subfigure}[b]{1\textwidth}
        \centering
        \includegraphics[width=1\textwidth]{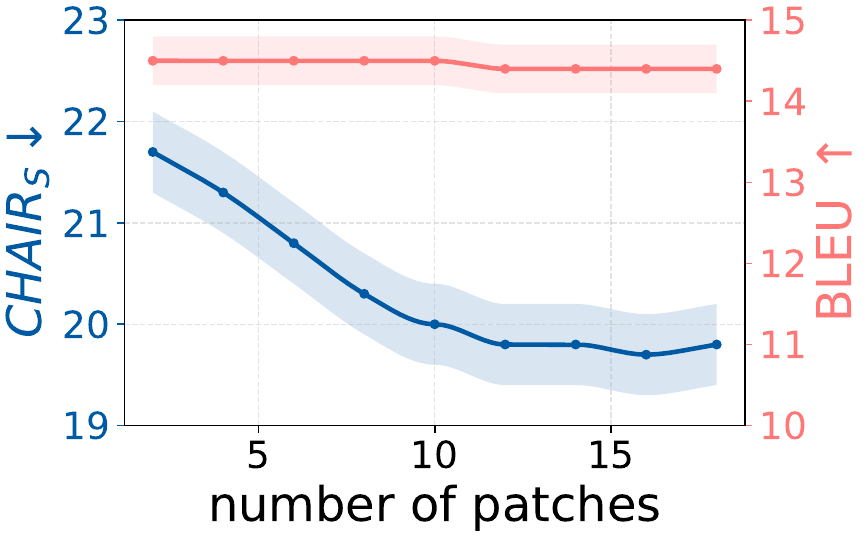}
    \end{subfigure}
    \caption{Performance as the number of selected image patches increases.}
    \label{fig:patch}
\end{minipage}
\vspace{-0.5cm}
\end{figure}

\subsection{Analysis of OT-based  Patch Reinforcement}
We evaluate OT-based patch reinforcement on the CHAIR benchmark using LLaVA as the backbone model. Fig.~\ref{fig:ot_analysis} provides both quantitative and qualitative evidence of its superiority over cosine-based selection. In (a), the distribution of margin differentials shows that OT consistently produces larger gaps between safe and unsafe patches, confirming its stronger discriminative ability. The scatter plot in (b) further supports this observation, as the majority of points lie above the $y=x$ line, indicating that OT achieves greater separation across patch-level comparisons. Importantly, Qualitative examples in (c), drawn from LLaVA-Bench, further illustrate that OT-selected patches concentrate on visually salient regions aligned with the image semantics. Together, these results validate that the adaptive transport plan in OT accentuates meaningful cues and reduces redundancy, thereby improving the model’s ability to mitigate hallucinations.
\begin{figure*}[!t]
\centering
\begin{subfigure}{0.32\textwidth}
    \centering
    \includegraphics[width=\textwidth]{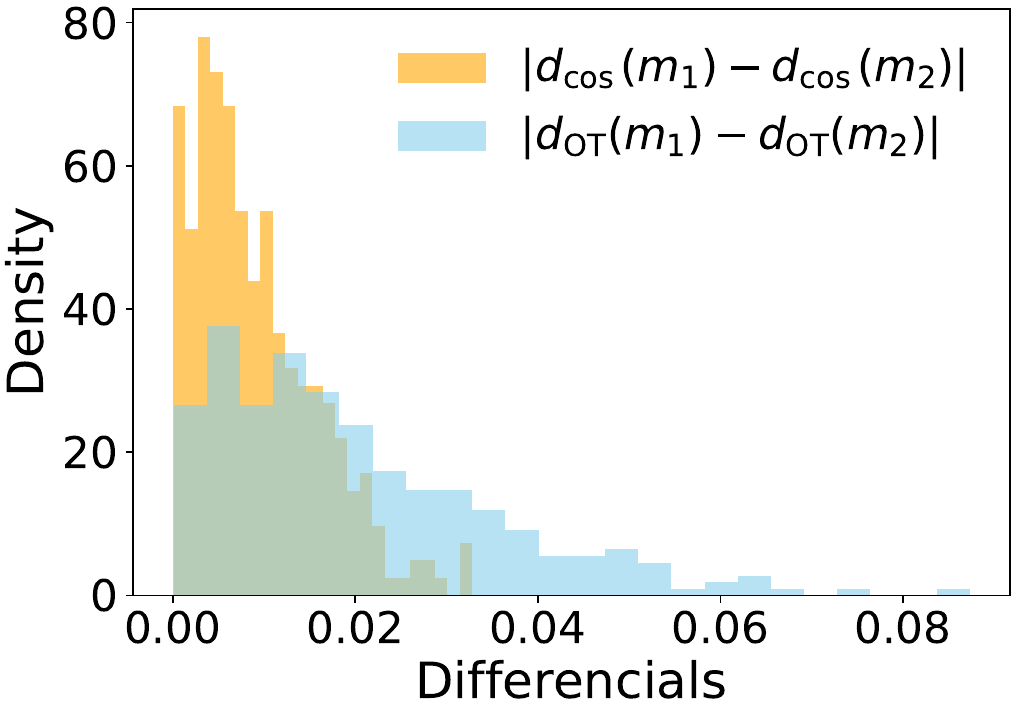}
    \caption{Distribution of margin differentials for OT and cosine distances.}
    \label{fig:caltech101_img}
\end{subfigure}
\hfill
\begin{subfigure}{0.32\textwidth}
    \centering
    \includegraphics[width=\textwidth]{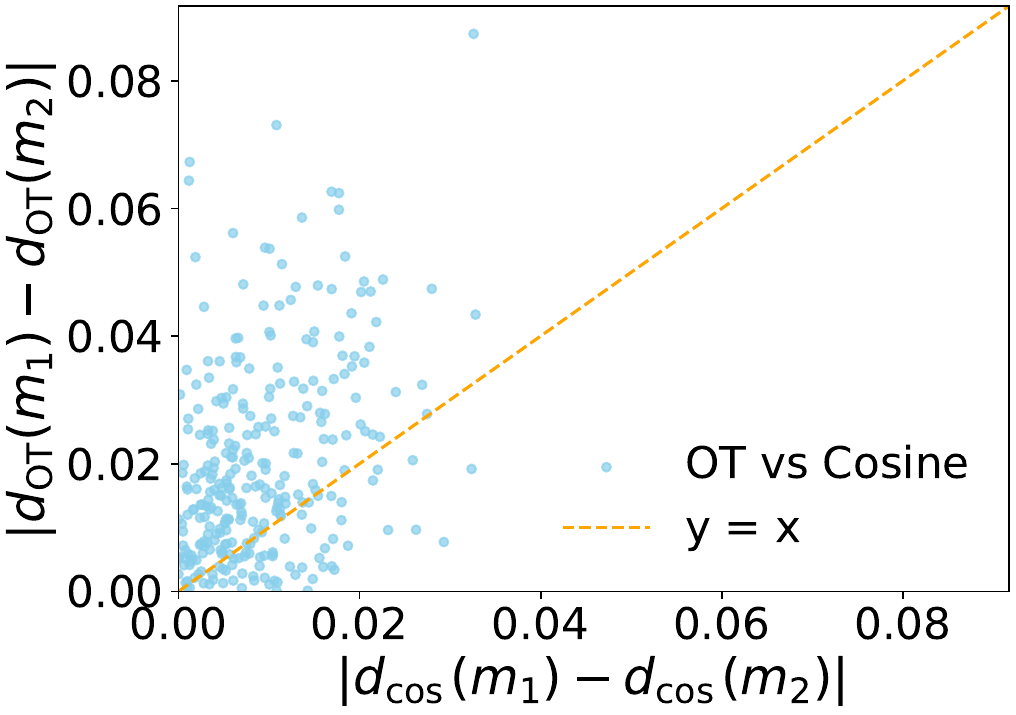}
    \caption{Comparison of margins between OT and cosine distance.}
    \label{fig:imagenet_img}
\end{subfigure}
\hfill
\begin{subfigure}{0.32\textwidth}
    \centering
    \includegraphics[width=0.9\textwidth]{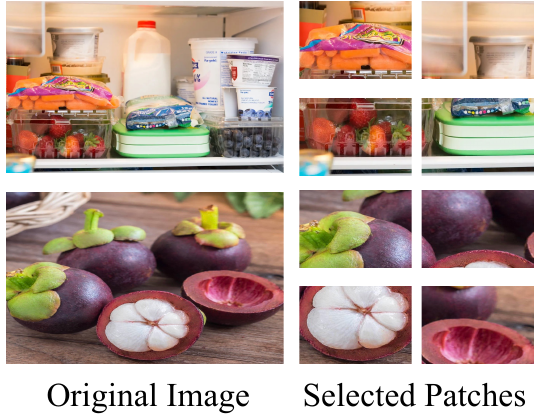}
    \caption{ Examples of original images and OT-selected patches.}
    \label{fig:caltech101_text}
\end{subfigure}
\caption{
Analysis of OT-based versus cosine-based patch selection.
(a) OT produces consistently larger margin differentials than cosine.
(b) Most patch-level results lie above the $y=x$ line, showing that OT provides clearer separation between patches.
(c) Qualitative evidence shows that OT focuses on visually informative regions aligned with the original image.
}
\label{fig:ot_analysis}
\end{figure*}

\begin{table}[t]
\tabstyle{7pt}
\centering
\caption{Comparison of inference speed, GPU memory usage, and hallucination performance on the CHAIR using a single A100 GPU.}
\label{tab:ablation_alignment}
\begin{tabular}{l|ccccc}
\toprule
Model & Avg.\ Latency $\downarrow$  & GPU Memory $\downarrow$ & $\text{CHAIR}_S$ $\downarrow$ & $\text{CHAIR}_i$ $\downarrow$ \\
\midrule
LLaVA-1.5-7B & 1.68s & 13.5G & 22.0  &6.7\\
VAF          & 1.69s & 13.5G & 20.4  &6.5\\
MemVR        & 2.05s & 13.6G & 21.6  &6.4\\
\rowcolor{mygray}$\ours$      & 2.07s & 13.7G & 18.4  &5.7\\
\bottomrule
\end{tabular}\label{tab:inference}
\end{table}

\subsection{Comparison of Efficiency}
The comparison in Table~\ref{tab:inference} indicates that our method substantially improves safety while preserving efficiency.
LLaVA-1.5-7B yields the highest hallucination rates (CHAIR$_s$=22.0, CHAIR$_i$=6.7). VAF and MemVR achieve moderate reductions, lowering CHAIR$_s$ to 20.4 and 21.6, and CHAIR$_i$ to 6.5 and 6.4, respectively, but with only limited improvements relative to the baseline. In contrast, our framework achieves the lowest hallucination rates (CHAIR$_s$=18.4, CHAIR$_i$=5.7), representing a clear gain in both sentence-level and object-level accuracy. These safety improvements come with a slight increase in latency (2.07s vs. 1.68s for LLaVA) and GPU memory (13.7G vs. 13.5G), but the overhead remains marginal compared to the robustness benefits. Overall, the results demonstrate the effectiveness of our approach in suppressing hallucinations while maintaining efficiency.




\section{Limitations \& Future Discussion}
Although AIR demonstrates clear effectiveness in mitigating hallucinations, its application to reasoning multimodal models and agents has not yet been explored. 
Future work can extend AIR to these broader settings, where adaptive reinforcement may further enhance robustness under complex reasoning tasks.  
In addition, the strength of OT in capturing distributional discrepancies suggests wider potential in multimodal alignment problems. 
Applying OT-based reinforcement to cross-modal grounding and alignment remains a promising direction for future research.  

\section{Conclusion}
In this work, we introduced AIR, an Adaptive vIsual Reinforcement framework to mitigate hallucinations in MLLMs. 
AIR combines prototype-based token reduction with OT-based patch reinforcement to selectively strengthen salient visual cues while suppressing redundancy. 
Experiments on representative MLLMs, including LLaVA-1.5-7B, Qwen-VL, and GLM-4V-9B, show that AIR substantially reduces hallucination while maintaining strong multimodal performance. 
Generally, AIR provides a training-free and effective solution for building reliable MLLMs.



\newpage
\section*{Ethics Statement}
Our method reduces hallucinations in MLLMs by directing the model’s attention toward critical visual regions while suppressing background interference. This improves grounding in visual evidence and enhances the reliability of generated outputs. However, since MLLMs are trained on large-scale web data, risks such as inherited biases and harmful content remain. We therefore recommend responsible use and continuous monitoring in practical applications.

\section*{Reproducibility Statement}
We have taken several steps to ensure reproducibility. Detailed descriptions of the datasets, data processing, and inference procedures are provided in the main paper (Sections~\ref{sec:method} and~\ref{sec:exp}) and the Appendix~\ref{appx:more_res}. These materials enable other researchers to reliably replicate our results and further build upon our work.

\section*{Acknowledgement}
This research is supported by 
the National Natural Science Foundation of China (U24B20180, No. 62576330),
and the National Natural Science Foundation of Anhui (No.2508085MF143).

\bibliography{iclr2026_conference}
\bibliographystyle{iclr2026_conference}

\newpage
\appendix
{
\hypersetup{linkcolor=black}
\tableofcontents
}
\newpage
\appendix
\section{LLM Usage Statement}
We used ChatGPT only for minor language editing to improve clarity and conciseness. No part of the research idea, methodology, or analysis was generated by LLMs.

\section{Implementations, Benchmarks, and Additional Results}\label{appx:more_res}
We provide additional details on the implementation, benchmarks, and results referenced in the main paper. To assess hallucinations and general multimodal ability, 

\subsection{Implementations details}\label{appx:inple}
For fair comparison, we follow the settings in prior work~\citep{clearsight, memvr}, adopting greedy decoding with {do\_sample=False}, temperature set to $0$, threshold $=0.75$, and beam size $=1$. 
{All baselines (MemVR~\citep{memvr}, VAF~\citep{clearsight}, VCD~\citep{VCD}) were run with their official recommended hyperparameters under the same decoding constraints and token caps to ensure consistent evaluation.}
In our method, we set Top$_Q$=100, $\tau=0.06$, and patch number$=12$. Unless otherwise specified, all experiments are conducted on a single NVIDIA A40 GPU.

\subsection{Benchmarks}\label{appx:bench}
\textbf{CHAIR}~\citep{chair} measures how well generated captions align with the visual content.
It includes two variants: CHAIR$_s$, which reports the proportion of captions containing hallucinated objects, and CHAIR$_i$, which quantifies the proportion of hallucinated objects among all mentioned objects.
Following prior practice, we use the MSCOCO val2014 split~\citep{mscoco} with annotations for 80 categories, and randomly sample 500 images.
The query prompt is fixed as: ``Please describe this image in detail.'' The CHAIR metric includes per-instance evaluation (CHAIR$_{I}$) and per-sentence evaluation (CHAIR$_{S}$), defined as follows:
$$\text{CHAIR}_{I} = \frac{ \vert\{ \text{hallucinated objects}\}\vert }{\vert\{\text{all objects mentioned}\}\vert }$$
$$\text{CHAIR}_{S} = \frac{ \vert\{ \text{sentences with hallucinated object}\}\vert }{\vert\{\text{  all sentences}\}\vert }$$

\textbf{POPE}~\citep{pope} evaluates hallucination through a binary VQA setting.
Given an image and an object name, the model is asked: ``Is [object] in this image? Please answer yes or no.''
Three sampling strategies are used to select the object queries: random, popular, and adversarial.
Performance is reported under all three settings.

\textbf{MME}~\citep{MME} is a comprehensive benchmark covering perception and cognition.
The perception part includes tasks such as existence, counting, location, color, scene, landmark, artwork, and OCR.
The cognition part includes commonsense reasoning, numerical calculation, translation, and code reasoning.
All questions are framed as yes/no to standardize evaluation across tasks.

\textbf{MMBench}~\citep{MMBench} is a large-scale benchmark for evaluating LVLMs, focusing on both perception and reasoning across multimodal inputs. It adopts a hierarchical taxonomy with Level-1, Level-2, and Level-3 dimensions, enabling fine-grained analysis of model performance in diverse scenarios.

\textbf{LLaVA-Bench}~\citep{llava_bench} is used to assess whether our method maintains general multimodal performance.
It consists of 60 situational questions, including dialogue, description, and reasoning, posed on randomly sampled MSCOCO val2014 images.
Generated answers are compared against GPT-4 text-only responses to evaluate consistency and instruction-following ability.

\subsection{Addintionl Results}\label{appx:add_res}
\textbf{\noindent{Results on CHAIR.}} To further corroborate the main results reported in the paper, we include supplementary evaluation on the CHAIR benchmark in the appendix. As shown in Table~\ref{tab:chair_results_32token}, AIR consistently achieves the lowest hallucination rates across both LLaVA-1.5-7B and Qwen-VL-Chat. For example, on LLaVA-1.5-7B, AIR reduces $\text{CHAIR}_s$ and $\text{CHAIR}_i$ to 6.8 and 2.9, respectively, outperforming MemVR (7.0 / 3.2) and the vanilla baseline (9.2 / 4.0). Similar trends are observed on Qwen-VL-Chat, where AIR improves grounding by lowering hallucination while maintaining competitive BLEU and Recall. These findings reinforce that selective reinforcement is more effective than indiscriminate token re-injection.
\renewcommand{\arraystretch}{1.3} 
\begin{table*}[!ht]
\centering
\caption{CHAIR evaluation results on the MSCOCO dataset of MLLMs with different methods. We use 32 as the maximum token in this experiment. Bold indicates the best result of all methods.}
\tabstyle{0.8pt}
\begin{tabular}{ l | c c c c | c c c c }
\toprule
\multirow{2.5}{*}{{Method}} 
&\multicolumn{4}{c|}{LLaVA-1.5-7B}
&\multicolumn{4}{c}{Qwen-VL-Chat}
 \\
\cmidrule{2-9}
&{CHAIR}$_S \downarrow$ &{CHAIR}$_I \downarrow$ &BLEU$\uparrow$ &Recall$\uparrow$
&{CHAIR}$_S \downarrow$ &{CHAIR}$_I \downarrow$ &BLEU$\uparrow$ &Recall$\uparrow$
   \\ 
\midrule
Vanilla
&$\text{6.8}${\basexxx{0.0}} &$\text{2.9}${\basexxx{0.0}} &$\text{22.9}${\basexxx{0.0}} &$\text{52.4}${\basexxx{0.0}}&$\text{6.6}${\basexxx{0.0}} &$\text{2.9}${\basexxx{0.0}} &$\text{21.4}${\basexxx{0.0}}
&$\text{52.1}${\basexxx{0.0}}
  \\

VCD 
&$\text{8.2}$\basexxx{1.4} &$\text{4.0}$\basexxx{1.1} &$\text{21.7}$\basexxxdown{2.2} &$\text{50.0}${\basexxxdown{2.4}}
&$\text{6.6}${\basexxx{0.0}} &$\text{2.9}${\basexxx{0.0}} &$\text{21.6}${\basexxx{0.2}}&$\text{52.0}${\basexxxdown{0.1}}
\\
MemVR 
&$\text{\textbf{6.6}}$\basexxxdown{0.2} &$\text{2.9}$\basexxx{0.0} &$\text{22.8}$\basexxxdown{0.1}&$\text{52.3}${\basexxxdown{0.1}} &$\text{7.0}${\basexxx{0.4}} &$\text{3.2}${\basexxx{0.3}} &$\text{21.3}${\basexxxdown{0.1}}&$\text{\textbf{52.3}}${\basexxx{0.2}}

\\ 
VAF 
&$\text{7.0}$\basexxx{0.2} &$\text{3.1}$\basexxx{0.2} &$23.0$\basexxx{0.1} &$\text{52.4}${\basexxx{0.0}}&$\text{6.6}${\basexxx{0.0}} &$\text{3.0}${\basexxx{0.1}} &$\text{21.4}${\basexxx{0.0}}
&$\text{51.7}${\basexxxdown{0.4}}

\\ 
\midrule
\rowcolor{mygray}$\ours$ 
&$\text{6.8}$\basexxx{0.0} &$\text{\textbf{2.9}}$\basexxx{0.0} &$\text{\textbf{23.0}}$\basexxx{0.1}&$\text{\textbf{52.8}}${\basexxx{0.4}} &$\text{\textbf{5.8}}${\basexxxdown{1.2}} &$\text{\textbf{2.8}}${\basexxxdown{0.1}} &$\textbf{21.7}${\basexxx{0.3}}&$\text{52.0}${\basexxxdown{0.1}}


\\ \bottomrule
\end{tabular}
\label{tab:chair_results_32token}
\end{table*}

\begin{table*}[!ht]
\centering
\caption{Performance on POPE benchmark using Qwen-VL-Chat. Bold numbers indicate the best results. 
We report accuracy and F1-score under three settings, \ie, \emph{Random}, \emph{Popular}, and \emph{Adversarial}, to show the robustness of different methods.} 
\tabstyle{4.2pt}
\begin{tabular}{l|lcccccc}
\toprule
\multirow{2.5}{*}{Datasets} 
 & \multirow{2.5}{*}{Methods} 
 & \multicolumn{2}{c}{Random}  
 & \multicolumn{2}{c}{Popular} 
 & \multicolumn{2}{c}{Adversarial} 
\\ 
\cmidrule(l){3-4} \cmidrule(l){5-6} \cmidrule(l){7-8}
&  
& Accuracy $\uparrow$ & F1-score $\uparrow$ 
& Accuracy $\uparrow$ & F1-score $\uparrow$  
& Accuracy $\uparrow$ & F1-score $\uparrow$  
\\ 
\midrule
\multirow{3}{*}{MSCOCO}
& {Vanilla}
 & 84.5 \basexxx{0.0} & 81.9 \basexxx{0.0}
 & 84.0 \basexxx{0.0} & 81.4 \basexxx{0.0}
 & 83.0 \basexxx{0.0} & 80.5 \basexxx{0.0}
\\

& MemVR 
 & 84.7 \basexxx{0.2} & 82.1 \basexxx{0.2}
 & 84.2 \basexxx{0.2} & 81.6 \basexxx{0.2}
 & 83.1 \basexxx{0.1} & 80.6 \basexxx{0.1}
\\

& \cellcolor{mygray}$\ours$
& \cellcolor{mygray}\textbf{84.7} \basexxx{0.2} & \cellcolor{mygray}\textbf{82.1} \basexxx{0.2}
& \cellcolor{mygray}\textbf{84.2} \basexxx{0.2} & \cellcolor{mygray}\textbf{81.6} \basexxx{0.2}
& \cellcolor{mygray}\textbf{83.2} \basexxx{0.2} & \cellcolor{mygray}\textbf{80.6} \basexxx{0.1} \\
\cmidrule(r){1-8}
\multirow{3}{*}{A-OKVQA}
& {Vanilla}
 & 86.2 \basexxx{0.2} & 84.5 \basexxx{0.2}
 & 86.2 \basexxx{0.2} & 84.5 \basexxx{0.2}
 & 81.2 \basexxx{0.0} & 80.0 \basexxx{0.0}
\\

& MemVR 
 & 86.8 \basexxx{0.6} & 85.3 \basexxx{0.8}
 & 86.8 \basexxx{0.6} & 85.3 \basexxx{0.8}
 & 81.6 \basexxx{0.4} & 80.6 \basexxx{0.6}
\\

& \cellcolor{mygray}$\ours$
& \cellcolor{mygray}\textbf{86.8} \basexxx{0.6} & \cellcolor{mygray}\textbf{85.3} \basexxx{0.8}
& \cellcolor{mygray}\textbf{86.8} \basexxx{0.6} & \cellcolor{mygray}\textbf{85.3} \basexxx{0.8}
& \cellcolor{mygray}\textbf{81.7} \basexxx{0.5} & \cellcolor{mygray}\textbf{80.7} \basexxx{0.7} \\
\cmidrule(r){1-8}
\multirow{3}{*}{GQA}
& {Vanilla}
 & 86.1 \basexxx{0.0} & 84.3 \basexxx{0.0}
 & 85.1 \basexxx{0.0} & 83.3 \basexxx{0.0}
 & 82.1 \basexxx{0.0} & 80.5 \basexxx{0.0}
\\

& MemVR 
 & 86.3 \basexxx{0.2} & 84.5 \basexxx{0.2}
 & 85.2 \basexxx{0.1} & 83.4 \basexxx{0.1}
 & 82.2 \basexxx{0.1} & 80.7 \basexxx{0.2}
\\

& \cellcolor{mygray}$\ours$ 
 & \cellcolor{mygray}\textbf{86.5} \basexxx{0.4} &\cellcolor{mygray}\textbf{84.7} \basexxx{0.4}
 & \cellcolor{mygray}\textbf{85.3} \basexxx{0.2} 
 & \cellcolor{mygray}\textbf{83.6} \basexxx{0.3}
 & \cellcolor{mygray}\textbf{82.4} \basexxx{0.3} 
 & \cellcolor{mygray}\textbf{81.0} \basexxx{0.5}

\\  \bottomrule
\end{tabular}
\label{tab:pope_Qwen}
\end{table*}
\textbf{\noindent{Results on POPE.}} We also report extended results on POPE to examine robustness under different perturbation settings. As presented in Table~\ref{tab:pope_Qwen}, AIR consistently outperforms baselines across Random, Popular, and Adversarial splits on MSCOCO, A-OKVQA, and GQA. In particular, on GQA adversarial evaluation, AIR achieves 86.5 accuracy and 85.7 F1, clearly surpassing MemVR (83.7 / 83.5) and the vanilla model (80.5 / 80.5). These consistent improvements across datasets and conditions demonstrate that AIR not only mitigates hallucinations but also enhances robustness against adversarial distractors.

\textbf{\noindent{Detailed results on MME.}} As reported in Table~\ref{tab:res_mme}, AIR achieves competitive performance across different LVLMs. For LLaVA-1.5-7B, it reaches an overall score of 1876.67, close to the best MemVR result, while delivering higher cognition (372.50) than the baseline, indicating stronger reasoning ability. For Qwen-VL-Chat, AIR obtains the best overall score (1829.01) and consistently improves perception and cognition, showing that our method effectively balances both dimensions. On GLM-4V-9B, AIR matches the baseline across all metrics, confirming that our approach introduces no degradation even on stronger models. These results highlight that AIR enhances cognition-oriented performance while maintaining robust overall accuracy across model scales.

\begin{table}[h!]
\centering
\caption{Results on the MME dataset. Bold indicates the best result of all methods.}
\tabstyle{20pt}
\begin{tabular}{l|lll}
\toprule
\multirow{2}{*}{Method} & \multicolumn{3}{c}{MME} \\
\cmidrule{2-4}
 & Overall $\uparrow$ & Perception $\uparrow$ & Cognition $\uparrow$ \\
\midrule
LLaVA-1.5-7B & 1863.89 & 1506.03 & 357.86 \\
\quad VCD & 1822.04 {\color{gray}\basexxxdown{41.85}} & 1484.90 {\color{gray}\basexxxdown{21.13}} & 337.14 {\color{gray}\basexxxdown{20.72}} \\
\quad MemVR & \textbf{1890.95} {\color{gray}\basexxx{27.06}} & \textbf{1512.38} {\color{gray}\basexxx{6.35}} & \textbf{378.57} {\color{gray}\basexxx{20.71}} \\
\quad VAF & 1746.55 {\color{gray}\basexxxdown{117.34}} & 1423.69 {\color{gray}\basexxxdown{82.34}} & 322.86 {\color{gray}\basexxxdown{35.00}} \\
\rowcolor{mygray} \quad $\ours$ & 1876.67 {\color{gray}\basexxx{12.78}} & 1504.17 {\color{gray}\basexxxdown{1.86}} & 372.50 {\color{gray}\basexxx{14.64}} \\
\midrule
Qwen-VL-Chat & 1818.06 & 1475.20 & 342.86 \\
\quad VCD & 1814.87 {\color{gray}\basexxxdown{3.19}} & 1465.94 {\color{gray}\basexxxdown{9.26}} & 348.93 {\color{gray}\basexxx{6.07}} \\
\quad MemVR & 1802.14 {\color{gray}\basexxxdown{15.92}} & 1464.64 {\color{gray}\basexxxdown{10.56}} & 337.50 {\color{gray}\basexxxdown{5.36}} \\
\quad VAF & 1801.61 {\color{gray}\basexxxdown{16.45}} & 1472.33 {\color{gray}\basexxxdown{2.87}} & 329.28 {\color{gray}\basexxxdown{13.58}} \\
\rowcolor{mygray}\quad $\ours$ & \textbf{1829.01} {\color{gray}\basexxx{10.95}} & \textbf{1476.87} {\color{gray}\basexxx{1.67}} & \textbf{352.14} {\color{gray}\basexxx{9.28}} \\
\midrule
GLM-4V-9B & 2161.28 & 1681.64 & 479.64 \\
\quad VCD & 2153.72 {\color{gray}\basexxxdown{7.56}} & 1668.72 {\color{gray}\basexxxdown{12.92}} & 485.00 {\color{gray}\basexxx{5.36}} \\
\quad MemVR & 2161.28 {\color{gray}\basexxx{0.00}} & 1681.64 {\color{gray}\basexxx{0.00}} & 479.64 {\color{gray}\basexxx{0.00}} \\
\quad VAF & 2161.53 {\color{gray}\basexxx{0.25}} & 1681.89 {\color{gray}\basexxx{0.25}} & 479.64 {\color{gray}\basexxx{0.00}} \\
\rowcolor{mygray}\quad $\ours$ & \textbf{2161.53} {\color{gray}\basexxx{0.25}} & \textbf{1681.89} {\color{gray}\basexxx{0.25}} & \textbf{479.64} {\color{gray}\basexxx{0.00}} \\
\bottomrule
\end{tabular}
\label{tab:res_mme}
\end{table}

\textbf{\noindent{Detailed results on MMBench.}} As shown in Table~\ref{tab:res_mmbench_app}, AIR achieves consistent improvements on MMBench. For LLaVA-1.5-7B, it obtains the highest overall score (64.69), with clear gains in FP-S (+6.0) and LR (+0.9), demonstrating stronger fine-grained perception and reasoning. For Qwen-VL-Chat, AIR boosts AR, FP-C, and LR simultaneously, yielding a balanced improvement across both perception and reasoning dimensions. On the stronger GLM-4V-9B, AIR matches the baseline without degradation, confirming the robustness of our framework across different model scales. These results indicate that AIR enhances both perception-oriented (AR, CP, FP-S, FP-C) and reasoning-oriented (LR, RR) abilities, particularly benefiting mid-scale MLLMs.

\begin{table}[t!]
\centering
\caption{Results on the MMBench dataset.  Abbreviations adopted: 
AR for Attribute Reasoning; 
CP for Coarse Perception; 
FP-C for Fine-grained Perception (Cross Instance);
FP-S for Fine-grained Perception (Single Instance); 
LR for Logical Reasoning; 
RR for Relation Reasoning.  
Bold indicates the best result of all methods. }
\tabstyle{3pt}
\begin{tabular}{l|lllllll}
\toprule
\multirow{2}{*}{Method} & \multicolumn{7}{c}{MMBench}  \\
\cmidrule(lr){2-8}					
 & AR  $\uparrow$ & CP  $\uparrow$ & FP-C  $\uparrow$ & FP-S  $\uparrow$ & LR  $\uparrow$ & RR  $\uparrow$ & Overall  $\uparrow$ \\
\midrule
LLaVA-1.5-7B & 73.37 & 77.03 & 57.34 & 61.92 & 30.51 & 53.04 & 64.60 \\
\quad VCD   & 68.34 \basexxxdown{ 5.03} & 75.34 \basexxxdown{ 1.69} & 55.24 \basexxxdown{ 2.10} & 62.80 \basexxx{ 0.88} & 28.81 \basexxxdown{ 1.70} & 53.04 \basexxx{ 0.00} & 61.60 \basexxxdown{ 3.00} \\
\quad MemVR & \textbf{73.37} \basexxx{ 0.00} & 77.03 \basexxx{ 0.00} & 57.34 \basexxx{ 0.00} & 61.92 \basexxx{ 0.00} & 30.51 \basexxx{ 0.00} & 53.04 \basexxx{ 0.00} & 64.60 \basexxx{ 0.00} \\
\quad VAF   & 17.08 \basexxxdown{ 56.29} & 21.28 \basexxxdown{ 55.75} & 18.88 \basexxxdown{ 38.46} & 10.92 \basexxxdown{ 51.00} & 5.08 \basexxxdown{ 25.43} & 8.70 \basexxxdown{ 44.34} & 14.78 \basexxxdown{ 49.82} \\ 
\rowcolor{mygray}\quad $\ours$ & 72.36 \basexxxdown{ 1.01} & \textbf{77.03} \basexxx{ 0.00} & \textbf{58.04} \basexxx{ 0.70} & \textbf{67.92} \basexxx{ 6.00} & \textbf{31.36} \basexxx{ 0.85} & \textbf{53.91} \basexxx{ 0.87} & \textbf{64.69} \basexxx{ 0.09} \\ 
\midrule
Qwen-VL-Chat & 61.31 & 75.00 & 51.05 & 65.53 & 30.51 & 45.22 & 59.88 \\
\quad VCD   & 60.80 \basexxxdown{ 0.51} & 75.68 \basexxx{ 0.68} & 51.75 \basexxx{ 0.70} & \textbf{66.21} \basexxx{ 0.68} & 31.36 \basexxx{ 0.85} & 45.22 \basexxx{ 0.00} & \textbf{60.31} \basexxx{ 0.43} \\
\quad MemVR & 62.31 \basexxx{ 1.00} & 73.65 \basexxxdown{ 1.35} & 53.15 \basexxx{ 2.10} & 65.53 \basexxx{ 0.00} & 32.20 \basexxx{ 1.69} & 44.35 \basexxxdown{ 0.87} & 60.05 \basexxx{ 0.17} \\ 
\quad VAF   & 61.31 \basexxx{ 0.00} & \textbf{75.68} \basexxx{ 0.68} & 52.45 \basexxx{ 1.40} & 64.85 \basexxxdown{ 0.68} & 31.36 \basexxx{ 0.85} & 44.35 \basexxxdown{ 0.87} & 60.05 \basexxx{ 0.17} \\ 
\rowcolor{mygray}\quad $\ours$ & \textbf{62.31} \basexxx{ 1.00} & 73.31 \basexxxdown{ 1.69} & \textbf{53.15} \basexxx{ 2.10} & 65.53 \basexxx{ 0.00} & \textbf{32.20} \basexxx{ 1.69} & \textbf{45.22} \basexxx{ 0.00} & 60.05 \basexxx{ 0.17} \\ 
\midrule
GLM-4V-9B & 85.93 & 86.15 & 69.93 & 84.64 & 64.41 & 83.48 & 81.27 \\ 
\quad VCD   & 85.93 \basexxx{ 0.00} & 85.14 \basexxxdown{ 1.01} & 68.53 \basexxxdown{ 1.40} & 82.94 \basexxxdown{ 1.70} & 61.86 \basexxxdown{ 2.55} & 83.48 \basexxx{ 0.00} & 80.15 \basexxxdown{ 1.12} \\
\quad MemVR & 85.93 \basexxx{ 0.00} & 86.15 \basexxx{ 0.00} & 69.93 \basexxx{ 0.00} & 84.64 \basexxx{ 0.00} & 64.41 \basexxx{ 0.00} & 83.48 \basexxx{ 0.00} & 81.27 \basexxx{ 0.00} \\ 
\quad VAF   & 85.93 \basexxx{ 0.00} & 86.15 \basexxx{ 0.00} & 69.93 \basexxx{ 0.00} & 84.64 \basexxx{ 0.00} & 64.41 \basexxx{ 0.00} & 83.48 \basexxx{ 0.00} & 81.27 \basexxx{ 0.00} \\ 
\rowcolor{mygray}\quad $\ours$ & \textbf{85.93} \basexxx{ 0.00} & \textbf{86.15} \basexxx{ 0.00} & \textbf{69.93} \basexxx{ 0.00} & \textbf{84.64} \basexxx{ 0.00} & \textbf{64.41} \basexxx{ 0.00} & \textbf{83.48} \basexxx{ 0.00} & \textbf{81.27} \basexxx{ 0.00} \\ 
\bottomrule
\end{tabular}
\label{tab:res_mmbench_app}
\end{table}

\noindent{\textbf{Results on Comprehensive Hallucination Benchmarks.}}
Across the five benchmarks in Tables~\ref{tab:hallusionbench}--\ref{tab:llava_bench_wild}, AIR shows consistent improvements. 
For HallusionBench (Table~\ref{tab:hallusionbench}), it achieves the strongest fACC together with the best easyA and hardA scores. 
On V* Bench (Table~\ref{tab:vstar}), AIR provides the highest Attribute, Spatial, and Overall results for both LLaVA-1.5 and Qwen-VL-Chat. 
MMHal-Bench (Table~\ref{tab:MMHal}) further shows that AIR attains the top average score while also yielding the lowest hallucination rate across all categories. 
For MM-Vet (Table~\ref{tab:MM-Vet}), AIR enhances both reasoning-oriented and OCR-related metrics, achieving the highest total score. 
Finally, on LLaVA-Bench (In-the-Wild) (Table~\ref{tab:llava_bench_wild}), AIR performs best on conversational, detailed, and complex queries. 
Together, these results highlight AIR's robustness across diverse hallucination types and evaluation settings.
\begin{table}[!ht]
\tabstyle{5pt}
\centering
\caption{{Results on HallusionBench.}
\label{tab:hallusionbench}}
\begin{tabular}{lccccc}
\toprule
\text{Model} & \text{fACC}$\uparrow$ & \text{qACC}$\uparrow$ & \text{easyA}$\uparrow$ & \text{hardA}$\uparrow$ & \text{aACC}$\uparrow$ \\
\midrule
LLaVA-1.5 & 17.9 & 8.1 & 36.0 & 36.7 & 41.5 \\
VCD      & 13.9 & 11.4 & 33.00& 34.7 & 41.1 \\
MemVR     & 17.9 & 9.0 & 36.9 & 37.7 & 42.5 \\
AIR       & 19.9 & 9.3 & 37.5 & 38.3 & 43.2 \\
\bottomrule
\end{tabular}
\end{table}

\begin{table}[!ht]
\centering
\tabstyle{10pt}
\caption{{Performance on V* Bench.}}
\label{tab:vstar}
\begin{tabular}{lccc}
\toprule
\text{Model} & \text{Attribute} & \text{Spatial} & \text{Overall} \\
\midrule
LLaVA-1.5 & 43.47 & 56.57 & 48.68 \\
MemVR     & 45.38 & 57.82 & 49.35 \\
AIR       & 48.23 & 59.31 & 51.26 \\ \midrule
Qwen-VL-Chat  & 74.78 & 68.42 & 72.25 \\
MemVR     & 75.31 & 69.08 & 73.58 \\
AIR       & 76.02 & 69.46 & 76.23 \\ 
\bottomrule
\end{tabular}
\end{table}

\begin{table}[!ht]
{
\caption{{Results on MMHal-Bench.}}\label{tab:MMHal}
\centering
\tabstyle{0.5pt}
\adjustbox{width=\linewidth}{
\begin{tabular}{lcc|cccccccc}
\toprule
\textbf{Method} & \textbf{Average score}↑ & \textbf{Hallucination rate}↓ &
\textbf{Attribute} & \textbf{Adversarial} & \textbf{Comparison} &
\textbf{Counting} & \textbf{Relation} & \textbf{Environment} &
\textbf{Holistic} & \textbf{Other} \\
\midrule
LLaVA-1.5 & 1.99 & 0.62 & 2.58 & 1.08 & 2.58 & 1.67 & 2.00 & 3.00 & 1.17 & 1.83 \\
VCD &        2.69 & 0.58 & 3.25 & 2.17 & 3.00 & \textbf{2.42} & 2.58 & 3.25 & 2.42 & \textbf{2.42} \\
MemVR   & 2.83 & 0.55 & 3.60 & 2.35 & 3.30 & 2.40 & 2.85 & 3.40 & 2.40 & 2.30 \\
AIR     & \textbf{3.05} & \textbf{0.48} &
\textbf{3.90} & \textbf{2.55} & \textbf{3.70} &
\textbf{2.50} & \textbf{3.10} & \textbf{3.70} &
\textbf{2.55} & {2.40} \\
\bottomrule
\end{tabular}}}
\end{table}

\begin{table}[!ht]
\centering
\tabstyle{5pt}
\caption{{Results MM-Vet.}}
\label{tab:MM-Vet}
\begin{tabular}{lccccc}
\toprule
\text{Model} &
\text{R}$\uparrow$ &
\text{OCR\_S}$\uparrow$ &
\text{OCR\_K\_R}$\uparrow$ &
\text{OCR\_G\_S}$\uparrow$ &
\text{Total}↑ \\
\midrule
LLaVA-1.5 & 67.6 & 17.7 & 21.2 & 10.0 & 31.1 \\
VCD     & 62.2 & 15.8 & 17.5 & 60.0 & 30.2 \\
MemVR     & 70.3 & 23.8 & 21.2 & 30.0 & 32.2 \\
\text{AIR} & \text{72.1} & \text{24.6} & \text{21.5} & \text{30.0} & \text{34.7} \\
\bottomrule
\end{tabular}
\end{table}

\begin{table}[!ht]
\centering
\tabstyle{5pt}
\caption{{Results on LLaVA-Bench (In-the-Wild).}}
\label{tab:llava_bench_wild}
\begin{tabular}{lccccc}
\toprule
\text{Model} &
\text{Convs}$\uparrow$ &
\text{Detail}$\uparrow$ &
\text{Complex}$\uparrow$ &
\text{All}$\uparrow$ &
\text{Average}$\uparrow$ \\
\midrule
LLaVA-1.5 & 58.8 & 52.1 & 74.6 & 63.4 & 64.8 \\
VCD     & 57.8 & 50.8 & 77.9 & 59.1 & 63.2 \\
MemVR     & 63.8 & 52.6 & 77.9 & 64.0 & 65.2 \\
\text{AIR} & \text{65.3} & \text{52.7} & \text{79.1} & \text{64.3} & \text{65.8} \\
\bottomrule
\end{tabular}
\end{table}

\begin{figure}[!t]
\centering
\begin{minipage}{0.3\textwidth}
    \centering
    \begin{subfigure}[b]{1\textwidth}
        \centering
        \includegraphics[width=1\textwidth]{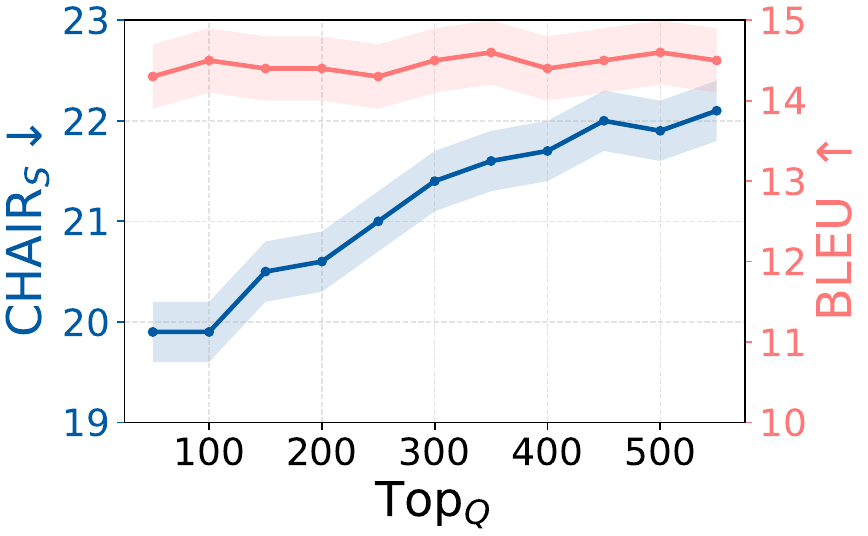}
    \end{subfigure}
    \caption{{Performance under different numbers of retained visual tokens $\text{Top}_Q$.}}
    \label{fig:topQ_qwen}
\end{minipage}
\quad
\begin{minipage}{0.3\textwidth}
    \centering
    \begin{subfigure}[b]{1\textwidth}
        \centering
        \includegraphics[width=1\textwidth]{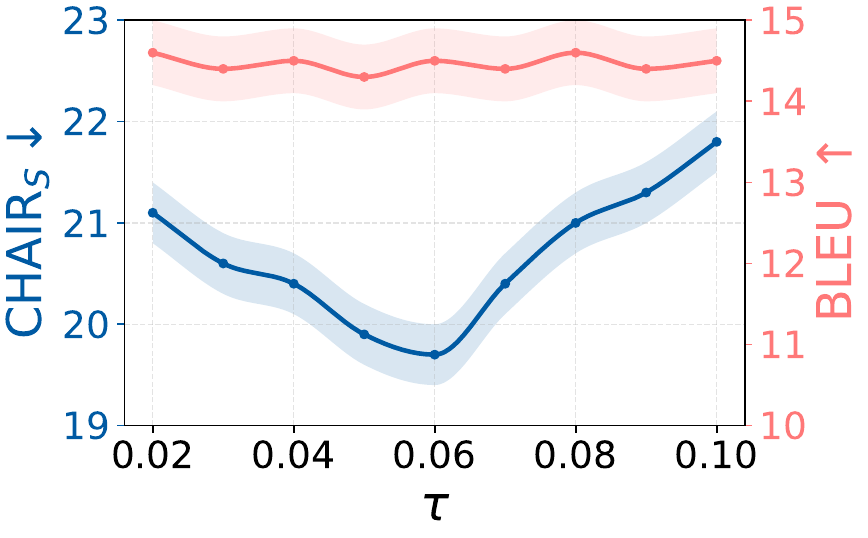}
    \end{subfigure}
    \caption{{Performance with varying OT-based distance threshold $\tau$.}}
    \label{fig:tau_qwen}
\end{minipage}
\quad
\begin{minipage}{0.3\textwidth}
    \centering
    \begin{subfigure}[b]{1\textwidth}
        \centering
        \includegraphics[width=1\textwidth]{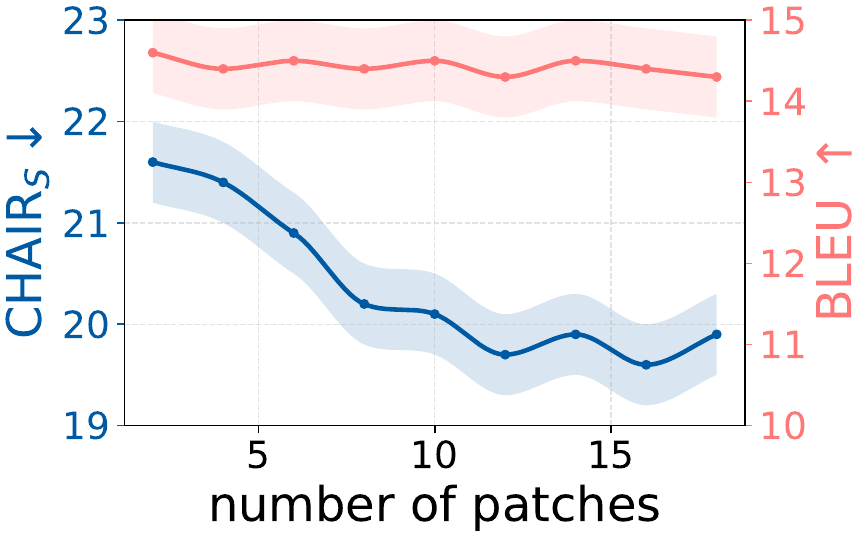}
    \end{subfigure}
    \caption{{Performance as the number of selected image patches increases.}}
    \label{fig:patch_qwen}
\end{minipage}
\vspace{-0.5cm}
\end{figure}
\noindent{\textbf{Ablation study on Qwen-VL-Chat.}}  
To verify that the hyperparameter behaviors observed in the main text generalize across architectures, we conduct the same ablations on Qwen-VL-Chat. As shown in Figure~\ref{fig:topQ_qwen}, varying the number of retained visual tokens (Top-$Q$) produces a clear U-shaped trend: very small or very large token sets increase hallucination, while a moderate range yields the best performance. Figure~\ref{fig:tau_qwen} shows a similar pattern when sweeping the OT distance threshold~$\tau$, where intermediate values achieve the most favorable balance between selective reinforcement and visual coverage. Finally, in Figure~\ref{fig:patch_qwen}, increasing the number of selected patches improves performance until reaching a stable region, after which the gains saturate. These observations align with the trends reported in the main experiments, indicating that AIR exhibits stable and consistent hyperparameter sensitivity across different model families.

\noindent{\textbf{Effect of the OT regularization strength.}}
We further investigate the influence of the OT regularization strength~$\epsilon$ on both LLaVA-1.5-7B and Qwen-VL-Chat, as shown in Figure~\ref{fig:epsilon_strength}. Across the tested range, the CHAIR$_S$ and BLEU curves remain stable, with only minor variations in hallucination and fluency. This consistency indicates that AIR is insensitive to the precise choice of~$\epsilon$ and maintains its effectiveness under different regularization strengths. The similar behavior across the two models further suggests that the effect is not architecture-specific and that the reinforcement mechanism is robust to changes in entropic regularization.

\begin{figure}[!t]
\centering

\begin{subfigure}[t]{0.45\textwidth}
    \centering
    \includegraphics[width=0.8\textwidth]{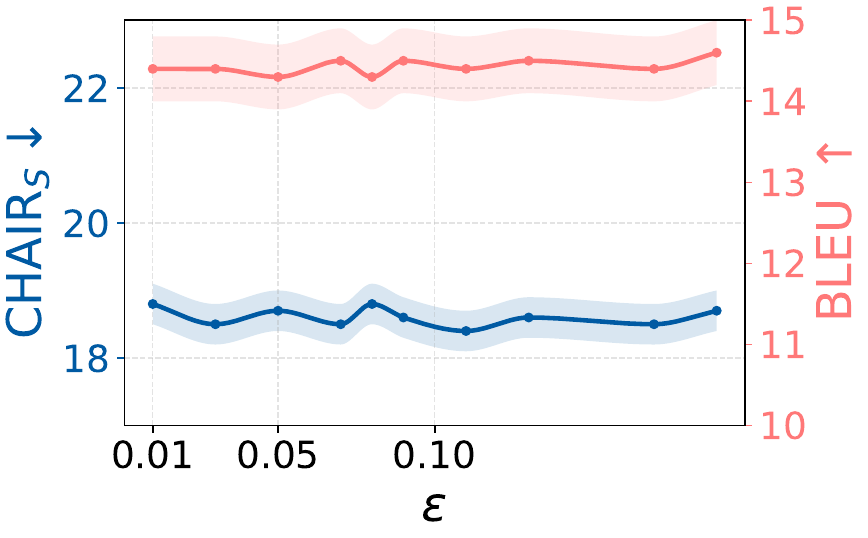}
    \caption{{LLaVA-1.5-7B.}}
    \label{fig:epsilon_llava}
\end{subfigure}
\quad
\begin{subfigure}[t]{0.45\textwidth}
    \centering
    \includegraphics[width=0.8\textwidth]{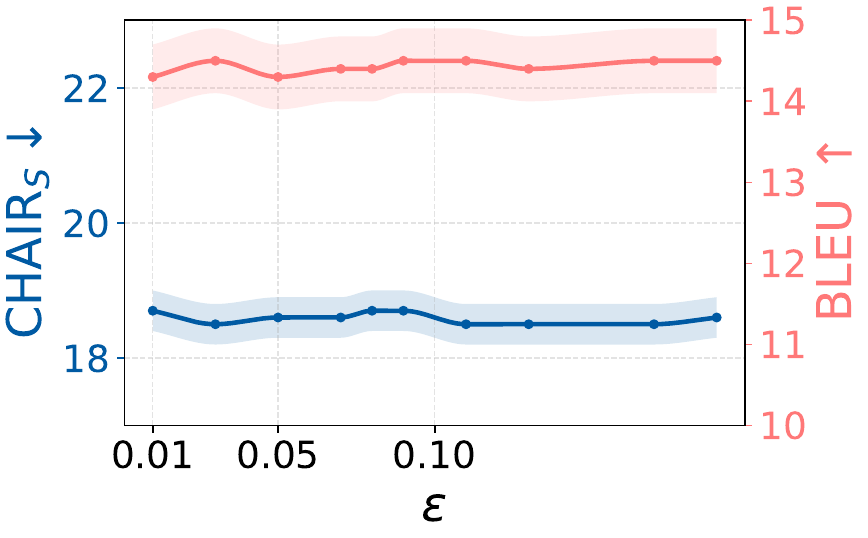}
    \caption{{Qwen-VL-Chat.}}
    \label{fig:epsilon_qwen}
\end{subfigure}
\caption{{Effect of the OT regularization strength $\epsilon$.}}
\label{fig:epsilon_strength}
\end{figure}

\noindent{\textbf{Adversarial Stress Test Results.}}
To examine AIR under more challenging visual conditions, we evaluate both LLaVA-1.5-7B and Qwen-VL-Chat with adversarial perturbations of $\delta=16/255$, as summarized in Table~\ref{tab:adv_res}. On both models, AIR consistently lowers CHAIR$_S$ and CHAIR$_I$ compared with the original outputs, indicating reduced hallucination under noisy crops and cluttered backgrounds. Although BLEU decreases under strong perturbations for all methods, AIR maintains competitive fluency while achieving substantially lower hallucination scores. These results show that AIR remains effective even when visual inputs are degraded by adversarial noise.
\begin{table}[!ht]
\tabstyle{5pt}
\centering
\caption{{Results on CHAIR and BLEU under adversarial scenarios.}}
\label{tab:adv_res}
\begin{tabular}{lccc}
\toprule
\text{Model} & $\text{CHAIR}_S\downarrow
$ & $\text{CHAIR}_I\downarrow$ & {BLEU}$\uparrow$ \\
\midrule
LLaVA-1.5-7B & 25.3 & 11.5 & 17.3 \\
AIR  & 22.1 & 9.5 & 14.2 \\
\midrule
Qwen-VL-Chat & 23.5 & 11.9 & 16.8 \\
AIR & 21.4 & 8.9 & 13.7 \\
\bottomrule
\end{tabular}
\end{table}

\noindent{\textbf{Comparison with the random patch baseline.}}
To examine whether AIR’s improvement can be reproduced without alignment-guided patch selection, we include a control variant that injects the same number of visual patches but selects them at random. As shown in Table~\ref{tab:random_patch}, random patch injection does not reduce hallucination and slightly worsens both $\text{CHAIR}_{S}$ and $\text{CHAIR}_{I}$ compared with the base model. In contrast, AIR achieves a clear reduction on both metrics while maintaining BLEU. This result shows that the improvement does not arise from merely adding visual patches, but from selecting regions that are semantically aligned with the hidden states, which is the key factor behind the observed gains.
\begin{table}[!ht]
\tabstyle{5pt}
\centering
\caption{{Comparison with the random patch baseline.}}
\label{tab:random_patch}
\begin{tabular}{lccc}
\toprule
\text{Model} & $\text{CHAIR}_S\downarrow$ & $\text{CHAIR}_I\downarrow$ & \text{BLEU}$\uparrow$ \\
\midrule
LLaVA-1.5-7B & 22.0 & 6.7 & 14.5 \\
random patch & 22.3 & 6.9 & 13.1 \\
AIR & \textbf{18.4} & \textbf{5.7} & \textbf{14.4} \\
\bottomrule
\end{tabular}
\end{table}

\noindent{\textbf{Effect of generation length.}}
We further examine the behavior of AIR under different generation lengths, as shown in Table~\ref{tab:gen_len}. Increasing the maximum output length from 64 to 128 and 256 tokens substantially raises hallucination levels for all models, confirming that longer captions introduce more opportunities for drift. Nonetheless, AIR consistently yields the lowest $\text{CHAIR}_{S}$ and $\text{CHAIR}_{I}$ scores across all lengths. In particular, AIR maintains a clear margin of improvement at 128 and 256 tokens, where hallucination becomes more pronounced for the baselines. These results indicate that the benefit of reinforcement is not limited to short captions and extends to longer outputs where hallucination pressure is higher.

\begin{table}[t]
\tabstyle{5pt}
\centering
\caption{{Results on different generation lengths.}}
\label{tab:gen_len}
\begin{tabular}{lcccccccccc}
\toprule
 & \multicolumn{3}{c}{$\text{CHAIR}_S\downarrow$} 
 & \multicolumn{3}{c}{$\text{CHAIR}_I\downarrow$}
 & \multicolumn{3}{c}{Recall$\uparrow$} \\
\cmidrule(lr){2-4} \cmidrule(lr){5-7} \cmidrule(lr){8-10}
\textbf{Model} & {64} & {128} & {256} & {64} & {128} & {256} & {64} & {128} & {256} \\
\midrule
LLaVA-1.5 & 22.0 & 47.5 & 47.8 & 6.7 & 13.1 & 13.4 & 66.2 & 73.1 & 80.6 \\
MemVR     & 21.6 & 46.6 & 47.2 & 6.5 & 13.0 & 13.2 & 66.5 & 73.5 & 81.0 \\
AIR       & \textbf{18.4} & \textbf{38.1} & \textbf{38.8} &
            \textbf{5.7} & \textbf{9.3} & \textbf{10.0} &
            \textbf{66.7} & \textbf{73.9} & \textbf{81.4} \\
\bottomrule
\end{tabular}
\end{table}

\noindent{\textbf{Comparison with fine-tuned models.}}
To examine whether AIR remains effective when combined with stronger base models produced by supervised fine-tuning, we further evaluate its integration with SENTINEL across multiple hallucination benchmarks, as shown in Table~\ref{tab:w_fine-tune}. For both LLaVA-1.5-7B and LLaVA-1.5-13B, adding AIR yields consistent reductions in hallucination metrics on Object HallBench, including both response and mention errors. On AMBER, AIR further decreases CHAIR, hallucination, and cognitive scores beyond those obtained by SENTINEL alone. Similar improvements are observed on HallusionBench, where AIR maintains or increases question accuracy. These results indicate that AIR complements fine-tuned models and provides additional gains across diverse hallucination types and dataset conditions.
\begin{table*}[t]
\tabstyle{5pt}
\centering
\caption{{Comparison with fine-tuned baselines.}}
\label{tab:w_fine-tune}
\begin{tabular}{lccccccc}
\toprule
 \textbf{Method} &
\multicolumn{2}{c}{\textbf{Object HallBench}} &
\multicolumn{3}{c}{\textbf{AMBER}} &
\textbf{HallusionBench} \\ \cmidrule(lr){2-3} \cmidrule(lr){4-6}
 &
Resp.$\downarrow$ & Ment.$\downarrow$ &
CHAIR$\downarrow$ & Hal.$\downarrow$ & Cog.$\downarrow$ &
Question Acc.$\uparrow$ \\ \midrule
 LLaVA-v1.5-7B   & 52.7 & 28.0 & 8.4 & 35.5 & 4.0 & 46.86 \\ \midrule
 SENTINEL~\citep{peng2025mitigating} &
{4.3} & {2.6} & 2.9 & {14.6} & 1.2 & {47.56} \\
 + AIR &
{3.9} & {2.2} & 2.1 & {13.3} & 1.1 & {47.25} \\

\midrule
 LLaVA-v1.5-13B  & 46.0 & 23.0 & 6.9 & 31.9 & 3.3 & 46.43 \\ \midrule
 SENTINEL~\citep{peng2025mitigating} &
{3.3} & {1.9} & {2.7} & {11.7} & {0.9} &
{46.77} \\
 + AIR &
{2.8} & {1.6} & {2.5} & {10.9} & {0.7} &
{46.77} \\
\bottomrule
\end{tabular}
\end{table*}


\begin{table}[!ht]
\tabstyle{5pt}
\centering
\caption{{Comparison of noise/averaging baselines.}}
\label{tab:noise_res}
\begin{tabular}{lccc}
\toprule
\text{Model} &
$\text{CHAIR}_S\downarrow$ &
$\text{CHAIR}_I\downarrow$ &
$\text{BLEU}\uparrow$ \\
\midrule
LLaVA-1.5-7B     & 22.0 & 6.7 & 14.5 \\
AIR (noise)      & 24.5 & 7.9 & 13.2 \\
AIR (averaged)   & 20.8 & 6.4 & 14.5 \\
\textbf{AIR}     & \textbf{18.4} & \textbf{5.7} & \textbf{14.4} \\
\bottomrule
\end{tabular}
\end{table}

\noindent{\textbf{Effect of noised and averaged visual inputs.}}  
To examine the stability of AIR under perturbed or degraded visual conditions, we further compare its behavior with two variants that modify the injected visual features. The first variant adds random noise to the selected patches, and the second replaces the reinforcement term with the average of all visual features. As shown in Table~\ref{tab:noise_res}, both variants lead to higher hallucination scores than the baseline, indicating that simply altering or smoothing visual features does not improve robustness and may even weaken grounding. In contrast, AIR continues to yield the lowest $\text{CHAIR}_{S}$ and $\text{CHAIR}_{I}$ values while maintaining comparable BLEU. These results demonstrate that AIR’s improvement is not attributable to noise injection or feature averaging, but to selectively reinforcing visual regions that remain semantically meaningful under distribution variations.

\section{Proof of Theorem on OT-Based Patch Selection}\label{app:proof}
\newtheorem{theorem}{Theorem}
\begin{theorem}
For two distinct patches \( m_1 \) and \( m_2 \) with cost matrices \( \mathbf{C}_{m_1} \neq \mathbf{C}_{m_2} \), the optimal transport (OT) distance is strictly more sensitive than the cosine distance:
\begin{equation}
|d_{\mathrm{OT}}(m_1) - d_{\mathrm{OT}}(m_2)| > |d_{\cos}(m_1) - d_{\cos}(m_2)|. 
\end{equation}
\end{theorem}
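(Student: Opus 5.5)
The plan is to put both metrics on the same footing as linear functionals of a coupling applied to the cost matrix. Take the index set of the original-side tokens to be the same for both metrics (so $K=Q$ in the definition of $d_{\cos}$). The uniform matrix $\mathbf{U}=\mathbf{a}\mathbf{b}_m^\top$, with entries $\tfrac{1}{QN}$, satisfies both marginal constraints of the OT problem defining $d_{\mathrm{OT}}(m)$, and
\begin{equation*}
d_{\cos}(m)=\langle \mathbf{U},\mathbf{C}_m\rangle .
\end{equation*}
Hence $d_{\cos}(m)$ is the transport cost of the independent coupling, while $d_{\mathrm{OT}}(m)=\langle \mathbf{T}_m,\mathbf{C}_m\rangle$ uses the optimal, or Sinkhorn, plan. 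The first step is therefore to define the \emph{transport gain}
\begin{equation*}
g(m)=d_{\cos}(m)-d_{\mathrm{OT}}(m).
\end{equation*}
This gain is nonnegative because $\mathbf{U}$ is feasible. The Sinkhorn case needs slightly more care: the entropic objective is minimized by $\mathbf{T}_m$, and $\mathbf{U}$ maximizes entropy, so $\langle \mathbf{T}_m,\mathbf{C}_m\rangle-\epsilon h(\mathbf{T}_m)\le\langle \mathbf{U},\mathbf{C}_m\rangle-\epsilon h(\mathbf{U})$ yields $\langle \mathbf{T}_m,\mathbf{C}_m\rangle\le\langle \mathbf{U},\mathbf{C}_m\rangle$.

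The second step is the exact decomposition
\begin{equation*}
d_{\mathrm{OT}}(m_1)-d_{\mathrm{OT}}(m_2)=\big(d_{\cos}(m_1)-d_{\cos}(m_2)\big)-\big(g(m_1)-g(m_2)\big).
\end{equation*}
Write $\Delta_{\cos}=d_{\cos}(m_1)-d_{\cos}(m_2)$ and $\Delta_g=g(m_1)-g(m_2)$. The claimed inequality then holds exactly when $\Delta_g$ is nonzero and has sign opposite to $\Delta_{\cos}$, with $|\Delta_g|<2|\Delta_{\cos}|$ if one wants to avoid a sign flip. Equivalently, the better-aligned patch, say $m_1$ with $d_{\cos}(m_1)<d_{\cos}(m_2)$, must enjoy a strictly larger transport gain. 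This is the precise form of the heuristic in Section~\ref{sec:analysis} that the adaptive plan ``amplifies differences'': the optimal plan concentrates mass on low-cost entries, and a salient patch has a more heterogeneous cost matrix, with some very good matches, so it has more to gain.

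The main obstacle is that the statement does not hold under the sole hypothesis $\mathbf{C}_{m_1}\neq\mathbf{C}_{m_2}$. If $\mathbf{C}_{m_2}=\mathbf{C}_{m_1}+c\,\mathbf{1}\mathbf{1}^\top$, then every coupling of unit mass shifts by exactly $c$. Hence $\mathbf{T}_{m_2}=\mathbf{T}_{m_1}$, $\Delta_g=0$, and equality holds rather than strict inequality. The same happens whenever both cost matrices are constant on the support relevant to the plan. So I would not try to prove the theorem verbatim. I would prove it under an explicit structural assumption, namely the gain-monotonicity condition $\operatorname{sign}(\Delta_g)=-\operatorname{sign}(\Delta_{\cos})$ with $0<|\Delta_g|<2|\Delta_{\cos}|$, and state the shift counterexample as a remark showing that the condition cannot be dropped.

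The remaining work is to give a sufficient condition for gain-monotonicity in terms of the cost matrices. My first attempt would model the costs of the two patches as sharing a common mean-level structure, with the salient patch having larger dispersion of $\mathbf{C}_m$ around its row and column means. For the unregularized problem, $g(m)$ is at least the gain of any single mass-preserving swap on a $2\times 2$ submatrix, which is proportional to that submatrix's ``interaction'' $\mathbf{C}(i,j)+\mathbf{C}(k,l)-\mathbf{C}(i,l)-\mathbf{C}(k,j)$. Larger dispersion therefore yields a larger guaranteed gain. For the Sinkhorn plan, a first-order expansion in $1/\epsilon$ around $\mathbf{U}$ gives $g(m)\approx\tfrac{1}{\epsilon}\,\mathrm{Var}_{\mathbf{U}}(\tilde{\mathbf{C}}_m)$, where $\tilde{\mathbf{C}}_m$ is the doubly centered cost. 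This makes the condition checkable, and it is the step I expect to need the most care.
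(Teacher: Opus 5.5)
Your central claim is correct: the statement is false as written, so the appendix argument cannot be repaired without an extra hypothesis. Your shift $\mathbf{C}_{m_2}=\mathbf{C}_{m_1}+c\,\mathbf{1}\mathbf{1}^\top$ works for the exact and the entropic problem alike. More generally, any additive perturbation $\Delta\mathbf{C}(k,n)=\alpha_k+\beta_n$ changes every feasible transport cost by the same constant. It therefore forces $d_{\mathrm{OT}}(m_1)-d_{\mathrm{OT}}(m_2)=d_{\cos}(m_1)-d_{\cos}(m_2)$.

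Your identity $\Delta_{\mathrm{OT}}=\Delta_{\cos}-\Delta_g$ is the paper's decomposition repackaged, since $-\Delta_g=\langle\mathbf{T}^*_{m_1}-\mathbf{U},\Delta\mathbf{C}\rangle+\langle\mathbf{T}^*_{m_1}-\mathbf{T}^*_{m_2},\mathbf{C}_{m_2}\rangle$. You have thus isolated exactly the quantity the paper needs to control, and the paper does not control it:
\begin{itemize}
\item It asserts the first term is negative via the heuristic that $\mathbf{C}_{m_1}(k,n)<\mathbf{C}_{m_2}(k,n)$ forces $\mathbf{T}^*_{m_1}(k,n)>1/(KN)$. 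This cannot hold entrywise because both plans have unit mass; in your example the term is exactly $0$.
\item It asserts the second term is nonnegative, which needs exact optimality. For a Sinkhorn plan one only gets $\ge\epsilon(h(\mathbf{T}^*_{m_1})-h(\mathbf{T}^*_{m_2}))$.
\item Even granting both sign claims, the two terms push in opposite directions, so nothing follows.
\item The closing chain uses $|d_{\mathrm{OT}}(m_1)-d_{\mathrm{OT}}(m_2)|\ge|\langle\mathbf{T}^*_{m_1},\Delta\mathbf{C}\rangle|$. This is the reverse of what the paper's own line $\langle\mathbf{T}^*_{m_1},\Delta\mathbf{C}\rangle\le d_{\mathrm{OT}}(m_1)-d_{\mathrm{OT}}(m_2)<0$ gives.
\end{itemize}

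Three points in your own write-up need fixing.

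First, the exact criterion is $\Delta_g(\Delta_g-2\Delta_{\cos})>0$. When $\Delta_g$ and $\Delta_{\cos}$ have opposite signs the magnitudes simply add and no sign flip is possible, so the cap $|\Delta_g|<2|\Delta_{\cos}|$ is superfluous. Conversely, same-sign gains with $|\Delta_g|>2|\Delta_{\cos}|$ also satisfy the inequality, as does any $\Delta_g\neq0$ when $\Delta_{\cos}=0$. So ``exactly when'' should read ``if''.

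Second, your counterexample gives only equality, which makes the failure look like a boundary effect; it is not. Take $Q=N=2$, $\mathbf{C}_{m_1}=0.2\,\mathbf{1}\mathbf{1}^\top$, and $\mathbf{C}_{m_2}$ with diagonal entries $0.3$ and off-diagonal entries $1.3$. Then $\Delta_{\cos}=-0.6$ and $\Delta_{\mathrm{OT}}=-0.6+g(m_2)$ with $0<g(m_2)\le 0.5$. Hence $|\Delta_{\mathrm{OT}}|<|\Delta_{\cos}|$ strictly, both for exact OT and for every $\epsilon>0$. These costs are realizable as $1-\cos$, e.g.\ with $\cos(\mathbf{z}_1,\mathbf{z}_2)=0.4$ and patch tokens in extra dimensions. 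This is precisely the case your hypothesis must exclude: the patch with the better mean cost has the smaller transport gain.

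Third, under gain-monotonicity the conclusion is a one-line consequence of your identity, so all the mathematical content lies in the sufficient condition on the cost matrices. That part is still only a sketch. A single $2\times2$ swap lower-bounds $g(m_1)$ by an amount of order $1/(QN)$, but a comparison also needs an upper bound on $g(m_2)$. The expansion $g(m)\approx\epsilon^{-1}\mathrm{Var}_{\mathbf{U}}(\tilde{\mathbf{C}}_m)$ is a large-$\epsilon$ asymptotic and needs an explicit remainder bound before it can certify a strict sign.
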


\begin{proof}
To prove that the optimal transport (OT) distance \( d_{\mathrm{OT}}(m) \) is more sensitive than the cosine distance \( d_{\cos}(m) \) in distinguishing patches \( m_1 \) and \( m_2 \), we compare their differences when the cost matrices satisfy \( \mathbf{C}_{m_1} \neq \mathbf{C}_{m_2} \). Let \( \mathbf{C}_{m_i} \in \mathbb{R}^{K \times N} \) denote the cost matrix for patch \( m_i \), with entries \( \mathbf{C}_{m_i}(k,n) = 1 - \cos(\mathbf{z}_k, \hat{\mathbf{z}}_n^{m_i}) \).

The OT distance is defined as:
\begin{equation}
d_{\mathrm{OT}}(m_i) = \langle \mathbf{T}_{m_i}^*, \mathbf{C}_{m_i} \rangle = \sum_{k=1}^K \sum_{n=1}^N \mathbf{T}_{m_i}^*(k,n) \mathbf{C}_{m_i}(k,n),
\end{equation}
where \( \mathbf{T}_{m_i}^* \) is the optimal transport plan computed via the Sinkhorn-Knopp algorithm, satisfying the marginal constraints \( \mathbf{T}_{m_i}^* \mathbf{1}_K = \mathbf{a} = \left[\frac{1}{K}, \dots, \frac{1}{K}\right]^\top \) and \( \mathbf{T}_{m_i}^{* \top} \mathbf{1}_N = \mathbf{b}_{m_i} = \left[\frac{1}{N}, \dots, \frac{1}{N}\right]^\top \). The cosine distance is given by:
\begin{equation}
d_{\cos}(m_i) = \langle \mathbf{U}, \mathbf{C}_{m_i} \rangle = \frac{1}{KN} \sum_{k=1}^K \sum_{n=1}^N \mathbf{C}_{m_i}(k,n),
\end{equation}
where \( \mathbf{U} \) is the uniform transport plan with \( \mathbf{U}(k,n) = \frac{1}{KN} \). Since \( \mathbf{T}_{m_i}^* \) minimizes the OT objective \( \langle \mathbf{T}, \mathbf{C}_{m_i} \rangle - \epsilon h(\mathbf{T}) \), where \( h(\mathbf{T}) = -\sum_{k,n} \mathbf{T}(k,n) \log \mathbf{T}(k,n) \) is the entropy and \( \epsilon \geq 0 \) controls regularization, it follows that:
\begin{equation}
\langle \mathbf{T}_{m_i}^*, \mathbf{C}_{m_i} \rangle \leq \langle \mathbf{U}, \mathbf{C}_{m_i} \rangle = d_{\cos}(m_i).
\end{equation}
This inequality is strict unless \( \mathbf{T}_{m_i}^* = \mathbf{U} \), which occurs only when \( \mathbf{C}_{m_i} \) is constant, a degenerate case not applicable here since \( \mathbf{C}_{m_1} \neq \mathbf{C}_{m_2} \).

To compare the sensitivity, consider the difference in OT distances:
\begin{align}
d_{\mathrm{OT}}(m_1) - d_{\mathrm{OT}}(m_2) &= \langle \mathbf{T}_{m_1}^*, \mathbf{C}_{m_1} \rangle - \langle \mathbf{T}_{m_2}^*, \mathbf{C}_{m_2} \rangle \notag \\
&= \langle \mathbf{T}_{m_1}^*, \mathbf{C}_{m_1} - \mathbf{C}_{m_2} \rangle + \langle \mathbf{T}_{m_1}^* - \mathbf{T}_{m_2}^*, \mathbf{C}_{m_2} \rangle,
\end{align}
where \( \Delta \mathbf{C} = \mathbf{C}_{m_1} - \mathbf{C}_{m_2} \). For the cosine distance, the difference is:
\begin{equation}
d_{\cos}(m_1) - d_{\cos}(m_2) = \langle \mathbf{U}, \Delta \mathbf{C} \rangle.
\end{equation}
The OT distance difference includes two terms: the cost difference under the optimized plan \( \mathbf{T}_{m_1}^* \), and the effect of differing transport plans applied to \( \mathbf{C}_{m_2} \). Since \( \mathbf{T}_{m_2}^* \) is optimal for \( \mathbf{C}_{m_2} \), we have:
\begin{equation}
\langle \mathbf{T}_{m_2}^*, \mathbf{C}_{m_2} \rangle \leq \langle \mathbf{T}_{m_1}^*, \mathbf{C}_{m_2} \rangle,
\end{equation}
implying:
\begin{equation}
\langle \mathbf{T}_{m_1}^* - \mathbf{T}_{m_2}^*, \mathbf{C}_{m_2} \rangle \geq 0.
\end{equation}
Because \( \mathbf{C}_{m_1} \neq \mathbf{C}_{m_2} \), the optimal plans typically differ (\( \mathbf{T}_{m_1}^* \neq \mathbf{T}_{m_2}^* \)), making this term strictly positive in general.

The key to the OT distance's greater sensitivity lies in the first term, \( \langle \mathbf{T}_{m_1}^*, \Delta \mathbf{C} \rangle \). Decompose it as:
\begin{equation}
\langle \mathbf{T}_{m_1}^*, \Delta \mathbf{C} \rangle = \langle \mathbf{U}, \Delta \mathbf{C} \rangle + \langle \mathbf{T}_{m_1}^* - \mathbf{U}, \Delta \mathbf{C} \rangle.
\end{equation}
Since \( \mathbf{T}_{m_1}^* \) assigns higher weights to pairs \( (k,n) \) where \( \mathbf{C}_{m_1}(k,n) \) is small (indicating high similarity), if \( \mathbf{C}_{m_1}(k,n) < \mathbf{C}_{m_2}(k,n) \) for some pairs, then \( \Delta \mathbf{C}(k,n) < 0 \), and \( \mathbf{T}_{m_1}^*(k,n) > \mathbf{U}(k,n) = \frac{1}{KN} \). This correlation makes \( \langle \mathbf{T}_{m_1}^* - \mathbf{U}, \Delta \mathbf{C} \rangle < 0 \), amplifying the magnitude of \( \langle \mathbf{T}_{m_1}^*, \Delta \mathbf{C} \rangle \) compared to \( \langle \mathbf{U}, \Delta \mathbf{C} \rangle \).

To establish the strict inequality, assume without loss of generality that \( d_{\mathrm{OT}}(m_1) < d_{\mathrm{OT}}(m_2) \), so:
\begin{equation}
d_{\mathrm{OT}}(m_1) - d_{\mathrm{OT}}(m_2) = \langle \mathbf{T}_{m_1}^*, \Delta \mathbf{C} \rangle + \langle \mathbf{T}_{m_1}^* - \mathbf{T}_{m_2}^*, \mathbf{C}_{m_2} \rangle < 0.
\end{equation}
Since \( \langle \mathbf{T}_{m_1}^* - \mathbf{T}_{m_2}^*, \mathbf{C}_{m_2} \rangle \geq 0 \), it follows that:
\begin{equation}
\langle \mathbf{T}_{m_1}^*, \Delta \mathbf{C} \rangle \leq d_{\mathrm{OT}}(m_1) - d_{\mathrm{OT}}(m_2) < 0.
\end{equation}
Thus:
\begin{equation}
|d_{\mathrm{OT}}(m_1) - d_{\mathrm{OT}}(m_2)| = -\left( \langle \mathbf{T}_{m_1}^*, \Delta \mathbf{C} \rangle + \langle \mathbf{T}_{m_1}^* - \mathbf{T}_{m_2}^*, \mathbf{C}_{m_2} \rangle \right).
\end{equation}
Given \( \langle \mathbf{T}_{m_1}^*, \Delta \mathbf{C} \rangle = \langle \mathbf{U}, \Delta \mathbf{C} \rangle + \langle \mathbf{T}_{m_1}^* - \mathbf{U}, \Delta \mathbf{C} \rangle \), and \( \langle \mathbf{T}_{m_1}^* - \mathbf{U}, \Delta \mathbf{C} \rangle < 0 \), we have:
\begin{equation}
\langle \mathbf{T}_{m_1}^*, \Delta \mathbf{C} \rangle < \langle \mathbf{U}, \Delta \mathbf{C} \rangle,
\end{equation}
implying:
\begin{equation}
|\langle \mathbf{T}_{m_1}^*, \Delta \mathbf{C} \rangle| > |\langle \mathbf{U}, \Delta \mathbf{C} \rangle|.
\end{equation}
The non-negative term \( \langle \mathbf{T}_{m_1}^* - \mathbf{T}_{m_2}^*, \mathbf{C}_{m_2} \rangle \geq 0 \) further increases the magnitude, so:
\begin{equation}
|d_{\mathrm{OT}}(m_1) - d_{\mathrm{OT}}(m_2)| \geq |\langle \mathbf{T}_{m_1}^*, \Delta \mathbf{C} \rangle| > |\langle \mathbf{U}, \Delta \mathbf{C} \rangle| = |d_{\cos}(m_1) - d_{\cos}(m_2)|.
\end{equation}
The strict inequality holds when \( \mathbf{C}_{m_1} \neq \mathbf{C}_{m_2} \), as the adaptive weighting of \( \mathbf{T}_{m_1}^* \) and the difference in transport plans amplify the cost differences. In the degenerate case, if \( \mathbf{C}_{m_1} = \mathbf{C}_{m_2} \), then \( \mathbf{T}_{m_1}^* = \mathbf{T}_{m_2}^* \), so \( d_{\mathrm{OT}}(m_1) = d_{\mathrm{OT}}(m_2) \) and \( d_{\cos}(m_1) = d_{\cos}(m_2) \), making both differences zero, consistent with the theorem's condition.
\end{proof}

\end{document}